\newtheorem{theorem}{Theorem}
\newtheorem{lemma}[theorem]{Lemma}
\newtheorem{definition}{Definition}
\newenvironment{manualtheorem}[1]{%
  \manualtheoreminner
}{\endmanualtheoreminner}
\title{Towards Modeling Uncertainties of Self-explaining Neural Networks via Conformal Prediction}
\author{
    Wei Qian\equalcontrib\textsuperscript{\rm 1},
    Chenxu Zhao\equalcontrib\textsuperscript{\rm 1},
    Yangyi Li\equalcontrib\textsuperscript{\rm 1},
    Fenglong Ma\textsuperscript{\rm 2},
    Chao Zhang\textsuperscript{\rm 3},
    Mengdi Huai\textsuperscript{\rm 1}
}
\begin{document}

\maketitle

\begin{abstract}
Despite the recent progress in deep neural networks (DNNs), it remains challenging to explain the predictions made by DNNs. Existing explanation methods for DNNs mainly focus on post-hoc explanations where another explanatory model is employed to provide explanations. The fact that post-hoc methods can fail to reveal the actual original reasoning process of DNNs raises the need to build DNNs with built-in interpretability. Motivated by this, many self-explaining neural networks have been proposed to generate not only accurate predictions but also clear and intuitive insights into why a particular decision was made. However, existing self-explaining networks are limited in providing distribution-free uncertainty quantification for the two simultaneously generated prediction outcomes (i.e., a sample's final prediction and its corresponding explanations for interpreting that prediction). Importantly, they also fail to establish a connection between the confidence values assigned to the generated explanations in the interpretation layer and those allocated to the final predictions in the ultimate prediction layer. To tackle the aforementioned challenges, in this paper, we design a novel uncertainty modeling framework for self-explaining networks, which not only demonstrates strong distribution-free uncertainty modeling performance for the generated explanations in the interpretation layer but also excels in producing efficient and effective prediction sets for the final predictions based on the informative high-level basis explanations. We perform the theoretical analysis for the proposed framework. Extensive experimental evaluation demonstrates the effectiveness of the proposed uncertainty framework.
\end{abstract}

\section{Introduction}
\label{sec:intro}

Recent advancements in DNNs have undeniably led to remarkable improvements in accuracy. However, this progress has come at the expense of losing visibility into their internal workings \cite{nam2020relative}. This lack of transparency often renders these deep learning models as ``black boxes'' \cite{li2020mri,bang2021explaining,zhao2023automated}. Without understanding the rationales behind the predictions, these black-box models cannot be fully trusted and widely applied in critical areas. In addition, explanations can facilitate model debugging and error analysis. These indicate the necessity of investigating the explainability of DNNs.

To offer valuable insights into how a model makes predictions, many approaches have surfaced to explain the behavior of black-box neural networks. These methods primarily focus on providing post-hoc explanations, attempting to provide insights into the decision-making process of the underlying model \cite{lundberg2017unified,ribeiro2016should,slack2020fooling,slack2021reliable,yao2022concept,han2022explanation,huai2020towards}. Note that these post-hoc methods need designing another explanatory model to provide explanations for a trained deep learning model. Despite their widespread use, post-hoc interpretation methods can be inaccurate or incomplete in revealing the actual reasoning process of the original model \cite{rudin2018please}.

To address these challenges, researchers have been actively promoting the adoption of inherently self-explaining neural networks \cite{koh2020concept,havasi2022addressing,chauhan2023interactive,yuksekgonul2022post,alvarez2018towards,sinha2023understanding}, which integrate interpretability directly into the model architecture and training process, producing explanations that are utilized during classification. Based on the presence of ground-truth basis concepts during training, existing works on self-explaining networks can be broadly classified into two groups: those with access to ground-truth and those without. For example, \cite{koh2020concept} trains the concept bottleneck layer to align concept predictions with established true concepts, while \cite{alvarez2018towards} utilizes autoencoder techniques to learn prototype-based concepts from the training data.

Although existing self-explaining networks can offer inherent interpretability to make their decision-making process transparent and accessible, they are limited in the distribution-free uncertainty quantification for both the generated explanation outcomes and the corresponding label predictions. Such limitations have the potential to significantly impede the widespread acceptance and utilization of self-explaining neural networks. In practice, such confidence values serve as indicators of the likelihood of correctness for each prediction, and understanding the likelihood of each prediction allows us to assess the degree to which we can rely on it. Note that self-explaining neural networks simultaneously generate two outputs (i.e., the final prediction decision and an explanation of that final decision). The most straightforward way is to separately and independently quantify uncertainties for the predicted explanations and their corresponding final predictions. Nevertheless, we argue that within self-explaining neural networks, the high-level interpretation space is informative due to its powerful inductive bias in generating high-level basis concepts for interpretable representations. Imagine a scenario in which we engage in the binary classification task distinguishing between zebras and elephants, and we hold a strong 95\% confidence in identifying the presence of the ``Stripes'' concept within a test image sample. In this context, we can assert that the model exhibits high confidence in assigning the ``Zebra'' label to this particular test sample. This also aligns with existing class-wise explanations \cite{li2021instance,zhao2021deep} that focus on uncovering the informative patterns that influence the model's decision-making process when classifying instances into a particular class category.

Our goal in this paper is to offer distribution-free uncertainty quantification for self-explaining networks via two indispensable and interconnected aspects: establishing a reliable uncertainty measure for the generated explanations and effectively transferring the explanation prediction sets to the associated predicted class outcomes. Note that the recent work of \cite{slack2021reliable} focuses exclusively on post-hoc explanations and relies on certain assumptions regarding the properties of these explanations, such as assuming a Gaussian distribution of explanation errors. On the other hand, \cite{kim2023probabilistic} mainly presents probabilities and uncertainties for individual concepts or classes, and does not provide the prediction set with uncertainty, neither for the concept layer nor for the prediction layer. Therefore, in addition to providing distribution-free uncertainty quantification for the generated basis explanations, it is also essential to study how to transfer the constructed prediction sets in the interpretation space to the final prediction decision space.

To address the above challenges, in this paper, we design a novel \textbf{\underline{un}}certainty modeling framework for \textbf{\underline{s}}elf-\textbf{\underline{e}}xplaining \textbf{\underline{n}}eural \textbf{\underline{n}}etworks (\textbf{unSENN}), which can make valid and reliable predictions or estimates with quantifiable uncertainty, even in situations where the underlying data distribution may not be well-defined or where traditional assumptions of parametric statistics might not hold. Specifically, in our proposed method, we first design novel non-conformity measures to capture the informative high-level concepts in the interpretation layer for self-explaining neural networks. Importantly, our proposed non-conformity measures can overcome the difficulties posed by self-explaining networks, even in the absence of explanation ground truth in the interpretation layer. Note that these calculated scores reflect the prediction error of the underlying explainer, where a smaller prediction error would lead to the construction of smaller and more informative concept sets. We also prove the theoretical guarantees for the constructed concept prediction sets in the interpretation layer. To get better conformal predictions for the final predicted labels, we then design effective transfer functions, which involve the search of possible labels under certain constraints. Due to the challenge posed by directly estimating final prediction sets through the formulated transfer functions, we transform them into new optimization frameworks from the perspective of adversarial attacks. We also perform the theoretical analysis for the final prediction sets. The extensive experimental results verify the effectiveness of our proposed method.
\section{Problem Formulation}
Let us consider a general multi-class classification problem with a total of $M > 2$ possible labels. For an input $x \in \mathbb{R}^{d}$, its true label is represented as $y \in \mathbb{R}$. Note that self-explaining networks generate high-level concepts in the interpretation layer prior to the output layer. In Definition \ref{def:Self-explaining}, we give a general definition of a self-explaining model \cite{alvarez2018towards,elbaghdadi2020self}.

\begin{definition} [Self-explaining Models] \label{def:Self-explaining}
    Let $x \in \mathcal{X} \subset \mathbb{R}^{d}$ and $\mathcal{Y} \subset \mathbb{R}^{M}$ be the input and output spaces. We say that $f: \mathcal{X} \rightarrow \mathcal{Y}$ is a self-explaining prediction model if it has the following form
    \begin{align}
        & f(x)= g(\theta_{1}(x)h_{1}(x), \cdots, \theta_{C}(x)h_{C}(x)),
    \end{align}
    where the prediction head $g$ depends on its arguments, the set $\{(\theta_{c}(x),h_{c}(x))\}_{c=1}^{C}$ consists of basis concepts and their influence scores, and $C$ is small compared to $d$.
\end{definition}

\noindent \textbf{Example 1: Concept bottleneck models} \cite{koh2020concept}.  Concept bottleneck models (CBMs) are interpretable neural networks that first predict labels for human-interpretable concepts relevant to the prediction task, and then predict the final label based on the concept label predictions. We denote raw input features by $x$, the labels by $y$, and the pre-defined true concepts by $c \in \mathbb{R}^{C}$. Given the observed training samples $D^{tra}=\{(x_i,c_i,y_i)\}_{i=1}^{N^{tra}}$, a CBM \cite{koh2020concept} can be trained using the concept and class prediction losses via two distinct approaches: \emph{independent} and \emph{joint}.

\noindent \textbf{Example 2: Prototype-based self-explaining networks} \cite{alvarez2018towards}.
    Different from CBMs \cite{koh2020concept}, prototype-based self-explaining networks usually use an autoencoder to learn a set of prototype-based concepts directly from the training data (i.e., $D^{tra}=\{(x_i,y_i)\}_{i=1}^{N^{tra}}$) during the training process without the pre-defined concept supervision information. These prototype-based concepts are extracted in a way that they can best represent specific target sets \cite{zhang2023learning}.

For a well-trained self-explaining network $f=g \circ h$, our goal in this paper is to quantify the uncertainty of the generated explanations in the interpretation layer (preceding the output layer) and the final predictions in the output layer. In particular, for a test sample $x$ and a chosen miscoverage rate $\varepsilon \in (0,1)$, our primary objective is twofold: First, we aim to calculate a set $\Gamma^{\varepsilon}_{cpt}(x)$ containing $x$'s true concepts with probability at least $1-\varepsilon$. In addition, we will investigate how to transfer the explanation prediction sets to the final decision sets. In practice, predictions associated with confidence values are highly desirable in risk-sensitive applications.

\section{Methodology}
\label{sec:method}
In this section, we take concept bottleneck models \cite{koh2020concept,havasi2022addressing,chauhan2023interactive,yuksekgonul2022post} as an illustrative example to present our proposed method. Note that concept bottleneck models incorporate pre-defined concepts into a supervised learning procedure. We denote the available dataset as $D=\{(x_i,c_{i}, y_i)\}_{i=1}^{N}$, where $x_i \in \mathbb{R}^{d}$, $c_i=[c_{i,1},\cdots,c_{i,C}] \in \mathbb{R}^{C}$ and $y_{i}\in [M]$. These samples are drawn exchangeably from some unknown distribution $P_{XCY}$. To train a self-explaining network, we first split the available dataset $D=\{(x_i,c_{i}, y_i)\}_{i=1}^{N}$ into a training set $D^{tra}$ and a calibration set $D^{cal}$, where $D^{tra} \cap D^{cal} = \emptyset$ and $D^{tra} \cup D^{cal} = D$. Given the available training set $D^{tra}$ associated with the supervised concepts, we can train a self-explaining network $f=g \circ h$, where $h: \mathbb{R}^{d} \rightarrow \mathbb{R}^{C}$ maps a raw sample $x$ into the concept space and $g: \mathbb{R}^{C} \rightarrow \mathbb{R}^{M}$ maps concepts into a final class prediction.

Given the underlying well-trained self-explaining model $f=g \circ h$, our goal is to provide validity guarantees on its explanation and final prediction outcomes in a \emph{post-hoc} way. Note that each $x_{i}$, its true relevant concepts are represented by a concept vector $c_i=[c_{i,1},\cdots, c_{i,j}, \cdots, c_{i,C}] \in \{0,1\}^{C}$, with $c_{i,j}=1$ indicating that $x_{i}$ is associated with the $j$-th concept. We also use $\mathcal{C}_{cpt}(x_{i})=\{j | c_{i,j}=1\}$ to represent the set of true concepts for $x_{i}$. For $x_{i}$, we can use $h: \mathbb{R}^{d} \rightarrow \mathbb{R}^{C}$ to obtain $h(x_{i})=[h_{1}(x_{i}),\cdots,h_{j}(x_{i}),\cdots,h_{C}(x_{i})]$, where $h_{j}(x_{i})$ is the prediction score of $x_{i}$ with regards to the $j$-th concept. We denote $[h_{[1]}(x_{i}),\cdots,h_{[j]}(x_{i}),\cdots,h_{[C]}(x_{i})]$ as the sorted values of $h(x_{i})$ in the descending order, i.e., $h_{[1]}(x_{i})$ is the largest (top-1) score, $h_{[2]}(x_{i})$ is the second largest (top-2) score, and so on. Furthermore, $[j]$ corresponds to the concept index of the top-$j$ concept prediction score, i.e., $j'=[j]$ if $h_{j'}(x_{i})=h_{[j]}(x_{i})$. In practice, researchers are usually interested in identifying the top-$K$ important concepts to help users understand the key factors driving the model's predictions \cite{agarwal2022openxai,brunet2022implications,yeh2020completeness,huai2022towards,rajagopal2021selfexplain,hitzler2022human}. Here, we can easily convert a general concept predictor $h$ into a top-$K$ concept classifier by returning the set of concepts corresponding to the set of top-$K$ prediction scores from $h$. For $x_{i}$, the top-$K$ concept prediction function returns the set $\tilde{\mathcal{C}}_{cpt}(x_i)=\{[1],\cdots,[K]\}$ for $1 \leq K < C$. Here, we assume the concept prediction scores are calibrated, i.e., taking values in the range of $[0,1]$. For $\Psi \in \mathbb{R}$, we can use simple transforms such as $\sigma(\Psi)=1/(1+e^{-\Psi})$ to map it to the range of $[0,1]$ without changing their ranking. Then, for $x_{i}$, we obtain the below calibrated concept importance scores
\begin{align}
    &\tilde{h}(x_{i})=[1/(1+\exp(-h_{1}(x_{i}))),\cdots,1/(1 \\
    &\quad +\exp(-h_{C}(x_{i})))] = [\sigma(h_{1}(x_{i})),\cdots,\sigma(h_{C}(x_{i}))], \notag 
\end{align} 
\noindent where $\sigma(h_{j}(x_{i})) \in [0,1]$ is the calibrated importance score of $x_{i}$ with regards to the $j$-th concept. Not that for $x_{i}$, it is associated with a concept subset $\mathcal{C}_{cpt}(x_{i})$, which can be represented by a vector $h^{*}(x_{i})=[h^{*}_{1}(x_{i}), \cdots, h_{j}^{*}(x_{i}), \cdots, h^{*}_{C}(x_{i})]$, where $h_{j}^{*}(x_{i})=1$ if and only if concept $j$ is associated with $x_{i}$, and $0$ otherwise. 

In order to quantify how ``strange'' the generated concept explanations are for a given sample $x_{i}$, we utilize the underlying well-trained self-explaining model $f$ to construct the following non-conformity measure
\begin{align}
\label{eq:NonScoreFConcept}
    & s(x_i,h^{*}(x_{i}),\tilde{h}(x_{i}))= \sum_{j=1}^{K} (1- 1/(1+\exp(-h_{[j]}(x_{i})))) \notag\\
    & + \lambda_{2} * \sum_{j=K+1}^{C} (1/(1+\exp(-h_{[j]}(x_{i}))))= \sum_{j=1}^{K} |h_{[j]}^{*}(x_{i}) \notag\\
    &\quad\quad -\tilde{h}_{[j]}(x_{i})|  + \lambda_{2} * \sum_{j=K+1}^{C} |h_{[j]}^{*}(x_{i}) - \tilde{h}_{[j]}(x_{i}) |, 
\end{align}
\noindent where $\lambda_{2}$ is a pre-defined trade-off parameter, and $[j]$ corresponds to the concept index of the top-$j$ concept prediction score of $\tilde{h}(x_{i})$. The above non-conformity measure returns small values when the concept prediction outputs of the true relevant concepts are high and the concept outputs of the non-relevant concepts are low. Note that smaller conformal scores correspond to more model confidence \cite{angelopoulos2021gentle,teng2021t}. In particular, when $\lambda_{2}=0$, we only consider the top-ranked concepts.

Based on the above constructed non-conformity measure, for each calibration sample $(x_{i},c_{i},y_{i}) \in D^{cal}$, we apply the non-conformity measure $s(x_i,h^{*}(x_{i}),\tilde{h}(x_{i}))$ to get $N^{cal}$ non-conformity scores $\{s_{i}\}_{i=1}^{N^{cal}}$, where $N^{cal}$ is the number of calibration samples. Then, we select a miscoverage rate $\varepsilon \in (0,1)$ and compute $Q_{1-\varepsilon}$ as the $\frac{\lceil (N^{cal}+1)(1-\varepsilon) \rceil}{N^{cal}}$ quantile of the non-conformity scores $\{s_{i}\}_{i=1}^{N^{cal}}$, where $\lceil \cdot \rceil$ is the ceiling function. Formally, $Q_{1-\varepsilon}$ is defined as
\begin{align}
\label{eq:ScoreFConcept00}
    & Q_{1-\varepsilon} = \inf \{Q: \frac{|\{i: s(x_i,h^{*}(x_{i}),\tilde{h}(x_{i})) \leq Q \}| }{N^{cal}} \\
    & \qquad\qquad\qquad \geq \frac{\lceil (N^{cal}+1)(1-\varepsilon) \rceil}{N^{cal}} \}. \notag 
\end{align}
\noindent Based on the above, given $Q_{1-\varepsilon}$ and a desired coverage level $1-\varepsilon \in (0,1)$, for a test sample $x^{test}$ (where $x^{test}$ is known but the true top-$K$ concepts $\mathcal{C}$ are not), we can get the explanation prediction set $\Gamma^{\varepsilon}_{cpt}(x^{test})$ for $x^{test}$ with $1-\varepsilon$ coverage guarantee. Algorithm \ref{alg:1} gives the procedure to calculate the concept prediction sets.

\begin{algorithm}[!t]
    \caption{Uncertainty quantification for self-explaining neural networks}
    \label{alg:1}
    \begin{algorithmic}[1]
        \STATE \textbf{Input:} Calibration data $D^{cal}$, pre-trained model $f=g \circ h$, non-conformity measure $s(\cdot)$, a test sample $x^{test}$, and a desired miscoverage rate $\varepsilon \in (0,1)$.
        \STATE \textbf{Output:} The concept prediction set $\Gamma^{\varepsilon}_{cpt}(x^{test})$ and label set $\Gamma^{\varepsilon}_{lab}(x^{test})$ for sample $x^{test}$.
        \STATE Calculate the non-conformity scores on the calibration samples (i.e., $D^{cal}$) based on Eqn. (\ref{eq:NonScoreFConcept})
        \STATE Compute the quantile value $Q_{1-\varepsilon}$ based on Eqn. (\ref{eq:ScoreFConcept00})
        \STATE Construct the concept prediction set $\Gamma^{\varepsilon}_{cpt}(x^{test})$ for $x^{test}$ with $1-\varepsilon$ coverage guarantee 
        \STATE Initialize $\Gamma^{\varepsilon}_{lab}(x^{test})=\emptyset$
        \FOR {$m=1, 2, \cdots, M$}
            \STATE Obtain $\delta$ by optimizing the loss in Eqn. (\ref{eq:DeltaOptima})
            \IF{$\sum_{m' \in [M]\setminus m} \mathbb{I} [g_{m} (v) > g_{m'}(v) ]==M-1$}
            \STATE Update $\Gamma^{\varepsilon}_{lab}(x^{test})=\Gamma^{\varepsilon}_{lab}(x^{test}) \cup \{m \}$
            \ENDIF
        \ENDFOR
        \STATE {\bfseries Return:} $\Gamma^{\varepsilon}_{cpt}(x^{test})$ and $\Gamma^{\varepsilon}_{lab}(x^{test})$.
    \end{algorithmic}
\end{algorithm}

\begin{definition} [Data Exchangeability] \label{def:DataExchange}
    Consider variables $z_{1},\cdots,z_{N}$. The variables are exchangeable if for every permutation $\tau$ of the integers $1,\cdots,N$, the variables $w_{1},\cdots,w_{N}$ where $w_{i}=z_{\tau(i)}$, have the same joint probability distribution as $z_{1},\cdots,z_{N}$.
\end{definition}

\begin{theorem} \label{thm:CptSetThm}
    Suppose that the calibration samples ($D^{cal}=\{(x_i,c_{i}, y_i)\}_{i=1}^{N^{cal}}$) and the given test sample $x^{test}$ are exchangeable. Then, if we calculate the quantile value $Q_{1-\varepsilon}$ and construct $\Gamma^{\varepsilon}_{cpt}(x^{test})$ as indicated above, for the above non-conformity score function $s$ and any $\varepsilon \in (0,1)$, the derived $\Gamma^{\varepsilon}_{cpt}(x^{test})$ satisfies
    \begin{align}
    \label{eq:ConceptSets}
    & P(\mathcal{C}_{cpt} (x^{test}) \in \Gamma^{\varepsilon}_{cpt}(x^{test}) ) \geq 1-\varepsilon,
    \end{align}
\noindent where $\mathcal{C}_{cpt} (x^{test})$ is the true relevant concept set for the given test sample $x^{test}$, and $\varepsilon$ is a desired miscoverage rate.
\end{theorem}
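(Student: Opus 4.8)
The plan is to prove this as an instance of the standard split (inductive) conformal coverage guarantee: reduce the set-membership event to a one-dimensional statement about where the test non-conformity score falls relative to the calibration scores, and then exploit exchangeability. First I would introduce the (unobserved) test score $s_{test} := s(x^{test}, h^{*}(x^{test}), \tilde{h}(x^{test}))$, evaluated at the true concept vector $h^{*}(x^{test})$ that encodes $\mathcal{C}_{cpt}(x^{test})$. The key structural observation is that, by the construction of $\Gamma^{\varepsilon}_{cpt}$, a candidate concept configuration is admitted into the prediction set precisely when its induced non-conformity score does not exceed the threshold $Q_{1-\varepsilon}$. Hence the coverage event $\{\mathcal{C}_{cpt}(x^{test}) \in \Gamma^{\varepsilon}_{cpt}(x^{test})\}$ is \emph{equivalent} to the scalar event $\{s_{test} \leq Q_{1-\varepsilon}\}$, turning a statement about set membership into a statement about an empirical quantile.

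Second, I would pool the $N^{cal}$ calibration scores $\{s_i\}_{i=1}^{N^{cal}}$ with $s_{test}$ and argue that exchangeability of the raw samples (Definition \ref{def:DataExchange}) is inherited by their scores. This is because each score is produced by applying the same fixed measure $s$ built from the held-out model $f=g\circ h$ trained on $D^{tra}$, so $s$ does not depend on the calibration or test data; applying a common deterministic map preserves exchangeability. Consequently $(s_1,\ldots,s_{N^{cal}},s_{test})$ are exchangeable, and the rank of $s_{test}$ within this augmented collection of $N^{cal}+1$ scores is uniform on $\{1,\ldots,N^{cal}+1\}$ (in the tie-free case), which yields the standard quantile lemma $P(s_{test} \leq \hat q_k) \geq k/(N^{cal}+1)$, where $\hat q_k$ is the $k$-th smallest calibration score.

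Third, I would connect the threshold to the rank via the chosen quantile level. Since $N^{cal}\cdot\frac{\lceil (N^{cal}+1)(1-\varepsilon)\rceil}{N^{cal}}$ is already the integer $\lceil (N^{cal}+1)(1-\varepsilon)\rceil$, the definition in Eqn.~(\ref{eq:ScoreFConcept00}) makes $Q_{1-\varepsilon}$ exactly the $\lceil (N^{cal}+1)(1-\varepsilon)\rceil$-th smallest calibration score. Substituting $k=\lceil (N^{cal}+1)(1-\varepsilon)\rceil$ into the quantile lemma gives
\begin{align}
P\!\left(s_{test} \leq Q_{1-\varepsilon}\right) \geq \frac{\lceil (N^{cal}+1)(1-\varepsilon)\rceil}{N^{cal}+1} \geq 1-\varepsilon, \notag
\end{align}
and combining this with the equivalence from the first step establishes Eqn.~(\ref{eq:ConceptSets}). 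The boundary case $\varepsilon < 1/(N^{cal}+1)$ forces the quantile index to exceed $N^{cal}$, so $Q_{1-\varepsilon}=+\infty$ and the coverage guarantee holds trivially.

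The main obstacle I anticipate is the bookkeeping around ties together with the exact equivalence used in the first and third steps. To keep the sub-uniform bound $P(\text{rank}\leq k)\geq k/(N^{cal}+1)$ valid when scores coincide with positive probability, one must fix the $\leq$ convention consistently both in the definition of $\Gamma^{\varepsilon}_{cpt}$ and in the rank, and then verify that the calibration-only quantile level $\frac{\lceil (N^{cal}+1)(1-\varepsilon)\rceil}{N^{cal}}$ really selects the $\lceil (N^{cal}+1)(1-\varepsilon)\rceil$-th order statistic --- the precise count needed to translate a quantile over $N^{cal}$ scores into a rank among $N^{cal}+1$ scores. Once this alignment is confirmed, the remaining steps are routine.
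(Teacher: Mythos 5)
Your proposal is correct and follows essentially the same route as the paper's own proof: reduce the coverage event to the scalar event $\{s_{test} \leq Q_{1-\varepsilon}\}$, identify $Q_{1-\varepsilon}$ with the $\lceil (N^{cal}+1)(1-\varepsilon)\rceil$-th smallest calibration score (with the trivial $Q_{1-\varepsilon}=\infty$ boundary case), and invoke exchangeability to bound the rank of the test score among the pooled $N^{cal}+1$ scores. Your added care about ties and the explicit remark that the score map preserves exchangeability (since $f$ is trained only on $D^{tra}$) are refinements of, not departures from, the paper's argument, which simply assumes distinct scores with probability one.
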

\begin{proof}
    Let $s_{1}, s_{2}, \cdots, s_{N^{cal}}, s_{test}$ denote the non-formality scores of the calibration samples and the given test samples. To avoid handling ties, we consider the case where these non-formality scores are distinct with probability 1. Without loss of generality, we assume these calibration scores are sorted so that $s_{1} < s_{2} < \cdots < s_{N^{cal}}$. In this case, we have that $Q_{1-\varepsilon}=s_{\lceil (N^{cal}+1)(1-\varepsilon) \rceil}$ when $\varepsilon \geq \frac{1}{N^{cal}+1}$ and $Q_{1-\varepsilon}=\infty$ otherwise. Note that in the case where $Q_{1-\varepsilon}=\infty$, the coverage property is trivially satisfied; thus, we only need to consider the case when $\varepsilon \geq \frac{1}{N^{cal}+1}$. We begin by observing the below equality of the two events
    \begin{align}
        & \{ \mathcal{C}_{cpt} (x^{test}) \in \Gamma^{\varepsilon}_{cpt}(x^{test})\}=\{s_{test} \leq Q_{1-\varepsilon} \}.
    \end{align}
\noindent By combining this with the definition of $Q_{1-\varepsilon}$, we have
    \begin{align}
        & \{ \mathcal{C}_{cpt} (x^{test}) \in \Gamma^{\varepsilon}_{cpt}(x^{test})\}=\{s_{test} \leq s_{\lceil (N^{cal}+1)(1-\varepsilon) \rceil}\}. \notag
    \end{align}
    \noindent Since the variables $s_{1},\cdots,s_{test}$ are exchangeable, we can have that $P(s_{test} \leq s_{k}) =k / (N^{cal}+1)$ holds true for any integer $k$. This means that $s_{test}$ is equally likely to fall in anywhere between the calibration scores $s_{1},\cdots,s_{N^{cal}}$. Notably, the randomness is over all variables. Building on this, we can derive that $P ( s_{test} \leq s_{\lceil (N^{cal}+1)(1-\varepsilon) \rceil} ) = \frac{\lceil (N^{cal}+1)(1-\varepsilon) \rceil}{(N^{cal}+1)} \geq 1-\varepsilon$, which yields the desired result.
\end{proof}
In the above theorem, we show that the proposed algorithm can output predictive concept sets that are rigorously guaranteed to satisfy the desired coverage property shown in Eqn. (\ref{eq:ConceptSets}), no matter what (possibly incorrect) self-explaining model is used or what the (unknown) distribution of the data is. The above theorem relies on the assumption that the data points are exchangeable (see Definition \ref{def:DataExchange}), which is widely adopted by existing conformal inference works \cite{cherubin2021exact,tibshirani2019conformal}. The data exchangeability assumption is much weaker than i.i.d. Note that i.i.d variables are automatically exchangeable \cite{sesia2022conformal}. For a random variable $v \in \mathbb{R}^{C}$, we denote $[v_{[1]},v_{[2]},\cdots,v_{[C]}]$ as the sorted values of $v$ in the descending order, i.e., $v_{[1]}$ is the largest (top-1) score, $v_{[2]}$ is the second largest (top-2) score, and so on. Then, we can construct the label prediction set for the test sample $x^{test}$ as
\begin{align}
\label{eq:NonScoreFLabel}
    & \Gamma^{\varepsilon}_{lab} (x^{test}) = \{y=g(v): \sum_{j=1}^{K} |h_{[j]}^{*}(x^{test})-\tilde{h}_{[j]}(v)|  \notag \\
    &  + \lambda_{2} * \sum_{j=K+1}^{C} |h_{[j]}^{*}(x^{test}) - \tilde{h}_{[j]}(v) | \leq Q_{1-\varepsilon}\},
\end{align}
\noindent where $\tilde{h}(v)=\sigma(v)$. In the above, since the true concepts are unknown for $x^{test}$, we instead calculate $h_{[j]}^{*}(x^{test})=\tilde{h}_{[j]}(x^{test})=\sigma(h_{[j]}(x^{test}))$. Without loss of generality, in the following, we take a setting where $\lambda_{2}=1$ as an example to discuss how to transfer the confidence prediction sets in the concept space to the final class label prediction layer, and the \emph{extension} to other cases can be easily derived. In this case, we calculate the set for $x^{test}$ as $\Gamma^{\varepsilon}_{lab} (x^{test})  = \{y=g(v): ||\tilde{h}(v)-h^{*}(x^{test})||_{1} \leq Q_{1-\varepsilon}\}$, where $v \in \mathbb{R}^{C}$, $\hat{v}=\sigma(v)=[\sigma(v_{1}),\cdots,\sigma(v_{C})]$, $g$ is the prediction head, $\tilde{h}(x^{test})=[\sigma(h_{1}(x^{test})),\cdots,\sigma(h_{C}(x^{test}))]$ and $Q_{1-\varepsilon}$ is derived based on the calibration set. To derive the label sets, we propose to solve the following optimization
\begin{align}
\label{eq:LabelSetOpt22}
    & \quad \max_{v \in \mathbb{R}^{C}} \sum_{m' \in [M]\setminus m} \mathbb{I} [g_{m}(v) > g_{m'} (v)] \\
    &\quad\quad s.t., ||\tilde{h}(v)-h^{*}(x^{test})||_{1} \leq Q_{1-\varepsilon},\notag 
\end{align}
\noindent where the quantile value $Q_{1-\varepsilon}$ is derived based on Eqn. (\ref{eq:ScoreFConcept00}). In the above, we want to find a $v$, whose predicted label is ``$m$'', under the constraint that $||\tilde{h}(v)-h^{*}(x^{test})||_{1} \leq Q_{1-\varepsilon}$. If we can successfully find such a $v$, we will include the label ``$m$'' in the label set; otherwise, we will exclude it. Notably, we need to iteratively solve the above optimization over all possible labels to construct the final prediction set.

However, it is difficult to directly optimize the above problem, due to the non-differentiability of the discrete components in the above equation. Below, we use $\delta=[\delta_{1},\cdots,\delta_{C}]$ to denote the difference between $\hat{v}$ and $\tilde{h}(x^{test})$, i.e., $\delta =\hat{v}-\tilde{h}(x^{test})$, where $\hat{v}=\sigma(v)=[\sigma(v_{1}),\cdots,\sigma(v_{C})]$, and $\tilde{h}(x^{test})=\sigma(h(x^{test}))$. Based on this, we can have
\begin{align}
\label{eq:VTransform}
    & v=[v_{1}=-\log (1/ (\tilde{h}_{1} (x^{test})+\delta_{1}) -1),\cdots, \\
    &\qquad\quad v_{C}=-\log (1/ (\tilde{h}_{C} (x^{test})+\delta_{C}) -1)]. \notag
\end{align}
\noindent To address the above challenge, we propose to solve the below reformulated optimization
\begin{align}
\label{eq:DeltaOptima}
  &  \min_{||\delta ||_{1} \leq Q_{1-\varepsilon}} \max (\max_{m'\neq m} g_{m'}(v) - g_{m}(v),-\beta),
\end{align}
\noindent where $m' \in [M]$, $\beta$ is a pre-defined value, $v$ is derived based on Eqn. (\ref{eq:VTransform}), and $g: \mathbb{R}^{C} \rightarrow \mathbb{R}^{M}$ maps $v$ into a final class prediction. $g(v)$ represents the logit output of the underlying self-explaining model for each class when the input is $v$. The above adversarial loss enforces the actual prediction of $v$ to the targeted label ``$m$'' \cite{huang2020metapoison,carlini2017towards}. If we can find such a $v$, the label ``$m$'' will be included in the prediction set; otherwise, it will not.

\begin{lemma} [\cite{tibshirani2019conformal}] \label{Lma:QuantileLma}
    If $V_{1}, \cdots, V_{n+1}$ are exchangeable random variables, then for any $\gamma \in (0,1)$, we have the following
    \begin{align}
        & P \{V_{n+1} \leq \text{Quantile} (\gamma; V_{1:n} \cup \{\infty\})\} \geq \gamma,
    \end{align}
    where $\text{Quantile} (\gamma; V_{1:n} \cup \{\infty\})$ denotes the level $\gamma$ quantile of $V_{1:n} \cup \{\infty\}$. In the above, $V_{1:n}=\{V_{1},\cdots,V_{n}\}$.
\end{lemma}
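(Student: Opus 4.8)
The plan is to reduce the statement to the classical exchangeability--rank argument, handling the $\infty$-padding and the ceiling in the empirical quantile with care. First I would make the quantile explicit: since $V_{1:n}\cup\{\infty\}$ is a set of $n+1$ values, its level-$\gamma$ quantile, under the convention $\mathrm{Quantile}(\gamma;\cdot)=\inf\{q:\hat F(q)\geq\gamma\}$ with $\hat F$ the empirical CDF of the padded set, equals the $k$-th smallest of these $n+1$ values, where $k=\lceil\gamma(n+1)\rceil$. Writing $V_{(1)}\le\cdots\le V_{(n)}$ for the order statistics of $V_{1:n}$ (assuming distinctness, exactly as in the proof of Theorem~\ref{thm:CptSetThm}), I would split on whether $k\le n$ or $k=n+1$.

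Second, I would dispose of the trivial case $k=n+1$: here the $k$-th smallest element of $V_{1:n}\cup\{\infty\}$ is the appended $\infty$, so $\mathrm{Quantile}(\gamma;V_{1:n}\cup\{\infty\})=\infty$ and the event $\{V_{n+1}\le\infty\}$ holds with probability $1\ge\gamma$; this is precisely the reason for padding with $\infty$. The substance lies in the case $k\le n$, where the quantile equals $V_{(k)}$, so the claim reduces to $P(V_{n+1}\le V_{(k)})\ge\gamma$.

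Third, I would invoke exchangeability through a rank argument. Let $R=1+|\{i\le n:V_i<V_{n+1}\}|$ denote the (strict-below) rank of $V_{n+1}$ among $V_1,\dots,V_{n+1}$. The combinatorial identity to verify is $\{V_{n+1}\le V_{(k)}\}=\{R\le k\}$: indeed $V_{n+1}\le V_{(k)}$ holds iff at most $k-1$ of the calibration variables lie strictly below $V_{n+1}$, which is exactly $R\le k$. Under distinctness, exchangeability of $V_1,\dots,V_{n+1}$ forces $R$ to be uniform on $\{1,\dots,n+1\}$, so chaining gives $P(V_{n+1}\le V_{(k)})=P(R\le k)=k/(n+1)=\lceil\gamma(n+1)\rceil/(n+1)\ge\gamma(n+1)/(n+1)=\gamma$, which closes the case and hence the lemma.

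I expect the main obstacle to be bookkeeping rather than depth: pinning down that the empirical quantile of the $\infty$-padded set is the $\lceil\gamma(n+1)\rceil$-th order statistic (so that the ceiling, not a floor, appears and produces the clean $\ge\gamma$ bound), and justifying the rank claim in the possible presence of ties. For ties I would use a symmetry-averaging argument rather than strict uniformity: defining $R_j=1+|\{i\ne j:V_i<V_j\}|$ for each $j$, one checks that the $k$ smallest order statistics all satisfy their strict-below rank $\le k$, so $\sum_{j=1}^{n+1}\mathbb{I}[R_j\le k]\ge k$ deterministically; taking expectations and using that exchangeability makes all $P(R_j\le k)$ equal yields $P(R_{n+1}\le k)\ge k/(n+1)$, and the inclusion $\{R\le k\}\subseteq\{V_{n+1}\le V_{(k)}\}$ still holds, so the bound survives. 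Alternatively, since the result is quoted from \cite{tibshirani2019conformal}, I would note that the distinct-value argument transfers verbatim under an arbitrary exchangeability-preserving tie-break.
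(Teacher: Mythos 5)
Your proposal is correct, and it is essentially the same argument the paper relies on: the paper itself states this lemma as a citation to \cite{tibshirani2019conformal} without proof, but the identical reasoning appears in its proof of Theorem~\ref{thm:CptSetThm} --- sort the scores, identify the quantile as the $\lceil\gamma(n+1)\rceil$-th order statistic (or $\infty$, in which case coverage is trivial), and use exchangeability to conclude $P(V_{n+1}\leq V_{(k)})=k/(n+1)\geq\gamma$. The only substantive addition in your write-up is the symmetry-averaging treatment of ties, which the paper sidesteps by assuming the scores are distinct with probability one; your tie-handling argument is sound and makes the result slightly more rigorous than the paper's version.
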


\begin{theorem} \label{thm:LabelSetThm}
    Suppose the calibration samples and the test sample are exchangeable, if we construct $ \Gamma^{\varepsilon}_{lab}(x^{test})$ as indicated above, the following inequality holds for any $\varepsilon \in (0,1)$
    \begin{align}
    & P(y^{test} \in \Gamma^{\varepsilon}_{lab} (x^{test}) ) \geq 1-\varepsilon,
    \end{align}
    where $y^{test}$ is the true label of $x^{test}$, and $\Gamma^{\varepsilon}_{lab} (x^{test})$ is the label prediction set derived based on Eqn. (\ref{eq:NonScoreFLabel}).
\end{theorem}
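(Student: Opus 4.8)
The plan is to reduce the label-coverage claim to the concept-coverage guarantee of Theorem \ref{thm:CptSetThm} through a single deterministic set-inclusion, and then to read off the probability bound from the same exchangeability/quantile machinery used there (Lemma \ref{Lma:QuantileLma}). Fix the test point and, following the $\lambda_{2}=1$ convention adopted just before Eqn. (\ref{eq:NonScoreFLabel}), let $s_{test} = \|\tilde h(x^{test}) - h^{*}(x^{test})\|_{1}$ be its concept non-conformity score, computed against the (unknown) true concept indicator $h^{*}(x^{test}) \in \{0,1\}^{C}$. The proof of Theorem \ref{thm:CptSetThm} already shows, via the event equality $\{\mathcal{C}_{cpt}(x^{test}) \in \Gamma^{\varepsilon}_{cpt}(x^{test})\} = \{s_{test} \leq Q_{1-\varepsilon}\}$, that $P(s_{test} \leq Q_{1-\varepsilon}) \geq 1-\varepsilon$. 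It therefore suffices to prove the event inclusion $\{s_{test} \leq Q_{1-\varepsilon}\} \subseteq \{y^{test} \in \Gamma^{\varepsilon}_{lab}(x^{test})\}$, after which monotonicity of $P$ delivers the theorem.

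The core step is to exhibit, on the event $s_{test} \leq Q_{1-\varepsilon}$, a feasible point of the search in Eqns. (\ref{eq:LabelSetOpt22})--(\ref{eq:DeltaOptima}) whose predicted class is the true label. I would take as witness the point $v^{\star}$ with $\tilde h(v^{\star}) = \sigma(v^{\star}) = h^{*}(x^{test})$, i.e. the one reproducing the true concept vector. Substituting $v^{\star}$ into the constraint of Eqn. (\ref{eq:NonScoreFLabel}) (where the algorithmic surrogate $\tilde h(x^{test})$ plays the role of the reference) collapses the left-hand side to $\|\tilde h(x^{test}) - h^{*}(x^{test})\|_{1} = s_{test} \leq Q_{1-\varepsilon}$, so $v^{\star}$ is feasible precisely on this event. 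It remains to argue that $g(v^{\star})$ is classified as $y^{test}$; this is the sufficiency property inherent in the self-explaining architecture of Definition \ref{def:Self-explaining}, where the prediction head $g$ reconstructs the label from the ground-truth basis concepts. Granting it, the defining condition of $\Gamma^{\varepsilon}_{lab}(x^{test})$ is met for $m = y^{test}$, so $y^{test}$ is placed in the set and the inclusion follows.

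Chaining the two pieces gives $P(y^{test} \in \Gamma^{\varepsilon}_{lab}(x^{test})) \geq P(s_{test} \leq Q_{1-\varepsilon}) \geq 1-\varepsilon$, with exchangeability entering only through Lemma \ref{Lma:QuantileLma}, exactly as in Theorem \ref{thm:CptSetThm}. I expect the construction of the witness $v^{\star}$ to be the main obstacle, on two counts. First, since $h^{*}(x^{test})$ is binary while $\sigma$ takes values in the open interval $(0,1)$, the identity $\sigma(v^{\star}) = h^{*}(x^{test})$ is attainable only in the coordinatewise limit $v^{\star} \to \pm\infty$; I would discharge this by a continuity argument, choosing $\sigma(v^{\star})$ within $\eta$ of $h^{*}(x^{test})$ so that the constraint holds at $s_{test} + O(\eta)$ whenever the inequality is strict, and invoking a positive logit margin (the parameter $\beta$ in Eqn. (\ref{eq:DeltaOptima})) to ensure $g$ still outputs $y^{test}$ at the perturbed witness. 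Second, the argument genuinely relies on the concept-sufficiency of $g$ at the true concepts: absent it, no feasible point need realize $y^{test}$ and the inclusion can fail, so I would state this sufficiency explicitly as the property that makes the transfer from concept sets to label sets valid. I would also note that the coverage statement is to be read relative to the score defined through $h^{*}(x^{test})$, with the surrogate $\tilde h(x^{test})$ affecting only the realized set and not the validity argument, and that for general $\lambda_{2}$ the same witness yields the identical weighted sum, so the proof is unchanged.
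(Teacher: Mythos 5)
Your proposal is essentially sound, but it takes a genuinely different route from the paper's own proof. The paper's appendix proof never constructs a witness and never discusses the transfer step at all: it shows, via a permutation-invariance argument on conditional CDFs, that score values computed by any function independent of the calibration-plus-test data are exchangeable, and then invokes Lemma \ref{Lma:QuantileLma} to assert the conclusion outright. In other words, the paper establishes only the probabilistic core, $P(s_{test} \leq Q_{1-\varepsilon}) \geq 1-\varepsilon$, and leaves entirely implicit the deterministic inclusion $\{s_{test} \leq Q_{1-\varepsilon}\} \subseteq \{y^{test} \in \Gamma^{\varepsilon}_{lab}(x^{test})\}$ --- which is precisely the step you make the centerpiece of your argument, via the witness $v^{\star}$ with $\sigma(v^{\star}) \approx h^{*}(x^{test})$. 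Conversely, what the paper spells out and you take for granted is the exchangeability of the scores themselves; inheriting that from Theorem \ref{thm:CptSetThm} is legitimate here, since the label set is calibrated against the same quantile $Q_{1-\varepsilon}$ of the same concept scores.

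Your decomposition also buys something important: it exposes that the inclusion step is not free. Membership of $y^{test}$ in $\Gamma^{\varepsilon}_{lab}(x^{test})$ requires not just feasibility of the witness (equivalent, up to the $\sigma$-boundary technicality you handle by perturbation and margin, to the concept-score event) but also that the head $g$ classify the true concept vector as $y^{test}$. You are right to flag this concept-sufficiency property as indispensable: if $g$ never outputs $y^{test}$ anywhere in concept space (e.g., a constant head), then $\Gamma^{\varepsilon}_{lab}(x^{test})$ can never contain $y^{test}$, yet the scores remain perfectly exchangeable, so the paper's appeal to Lemma \ref{Lma:QuantileLma} alone cannot close the argument. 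Your proposal therefore identifies a hidden assumption that the paper's own proof silently skips over. One point to tighten in your write-up: for $\lambda_{2} \neq 1$, the top-$K$ weights inside Eqn. (\ref{eq:NonScoreFLabel}) are taken in the ordering induced by $\tilde{h}(v)$, whereas the calibration score of Eqn. (\ref{eq:NonScoreFConcept}) orders coordinates by $\tilde{h}(x_{i})$, so your claim that the same witness yields the identical weighted sum needs these two orderings to coincide; the paper sidesteps this by restricting the transfer discussion to $\lambda_{2}=1$, where the sum runs over all coordinates and the ordering is immaterial.
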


Due to space constraints, the complete proof for Theorem \ref{thm:LabelSetThm} is postponed to the Appendix. This proof is based on Lemma \ref{Lma:QuantileLma}.

\textbf{Discussions on prototype-based self-explaining neural networks.} Here, we will discuss how to generalize our above method to the prototype-based self-explaining networks, which do not depend on the pre-defined expert concepts \cite{alvarez2018towards,li2018deep}. They usually adopt the autoencoder network to learn a lower-dimension latent representation of the data with an encoder network, and then use several network layers over the latent space to learn $C$ prototype-based concepts, i.e., $\{p_{j}\in \mathbb{R}^{q}\}_{j=1}^{C}$. After that, for each $p_{j}$, the similarity layer computes its distance from the learned latent representation (i.e., $e_{enc}(x)$) as $h_{j}(x)=|| e_{enc}(x)-p_{j} ||_{2}^{2}$. The smaller the distance value is, the more similar $e_{enc}(x)$ and the $j$-th prototype ($p_{j}$) are. Finally, we can output a probability distribution over the $M$ classes. Nonetheless, we have no access to the true basis explanations if we want to conduct conformal predictions at the prototype-based interpretation level. To address this, for prototype-based self-explaining networks, we construct the following non-conformity measure
\begin{align}
\label{eq:NonconforScoreFAutoEn}
    & s(x_i,y_{i},f)=  \inf_{v \in \{v: \ell(g(v),y_{i}) \leq \alpha\} } [\sum_{j=1}^{K} |h_{[j]}(x_{i})-v_{[j]}| \notag \\
    &\qquad\qquad + \lambda_{2} * \sum_{j=K+1}^{C} |h_{[j]}(x_{i})-v_{[j]}| ],
\end{align}
\noindent where $\ell(g(v),y_{i})$ is the classification cross entropy loss for $v$, and $\alpha$ is a predefined small threshold value to make the prediction based on $v$ to be close to label $y_i$. Based on the above non-conformity measure, we can calculate the non-conformity scores for the calibration samples, and then calculate the desired quantile value. Based on this, for the given test sample, following Eqn. (\ref{eq:LabelSetOpt22}) and (\ref{eq:NonconforScoreFAutoEn}), we can calculate the final prediction set. However, it is usually complicated to calculate the above score due to the infimum operator. To overcome this challenge, we apply gradient descent starting from the trained similarity vector to find a surrogate vector around it. Additionally, to obtain the final prediction sets, we follow Eqn. (\ref{eq:DeltaOptima}) to determine whether a specific class label should be included in the final prediction sets.

\section{Experiments}
\label{sec:exp}
In this section, we conduct experiments to evaluate the performance of the proposed method (i.e., unSENN). Due to space limitations, more experimental details and results (e.g., experiments on more self-explaining models and running time) can be found in the Appendix.

\subsection{Experimental Setup}
\textbf{Real-world datasets.} In experiments, we adopt the following real-world datasets: \textbf{CIFAR-100 Super-class} \cite{fischer2019dl2} and \textbf{MNIST} \cite{deng2012mnist}. Specifically, the CIFAR-100 Super-class dataset comprises 100 diverse classes of images, which are further categorized into 20 super-classes. For example, the five classes \emph{baby, boy, girl, man} and \emph{woman} belong to the super-class \emph{people}. In this case, we exploit each image class as a concept-based explanation for the super-class prediction \cite{hong2023concept}. The MNIST dataset consists of a collection of grayscale images of handwritten digits (0-9). In experiments, we consider the MNIST range classification task, wherein the goal is to classify handwritten digits into five distinct classes based on their numerical ranges. Specifically, the digits are divided into five non-overlapping ranges (i.e., $[0,1], [2,3], [4,5],[6,7], [8,9]$), each assigned a unique label (the label set is denoted as $\{0, 1, 2, 3, 4\}$). For example, an input image depicting a handwritten digit ``4'' is classified as class 2 within the range of digits 4 to 5. Here, we exploit the identity of each digit as a concept-based explanation for the ranges prediction \cite{rigotti2021attention}. 

\begin{figure}[t]
\centering
\begin{subfigure}{0.495\linewidth}
\includegraphics[width=1\linewidth]{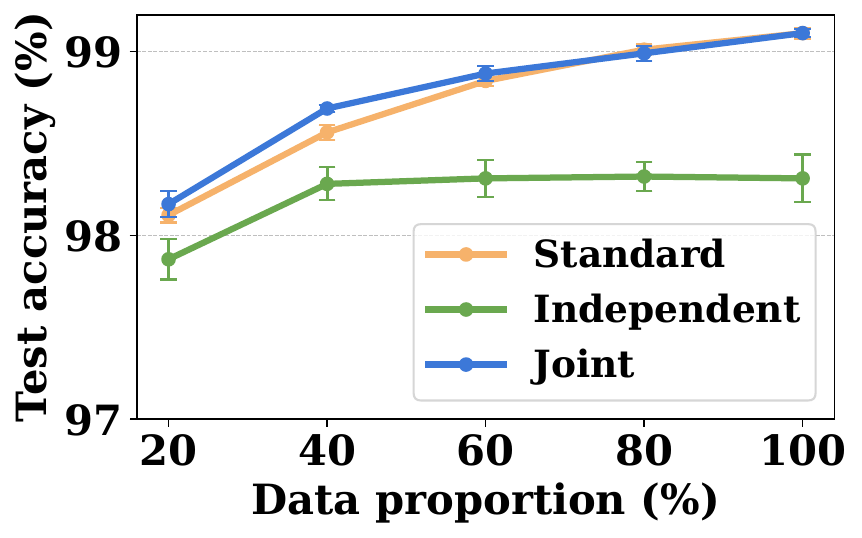}
\caption{MNIST}
\end{subfigure}
\begin{subfigure}{0.495\linewidth}
\includegraphics[width=1\linewidth]{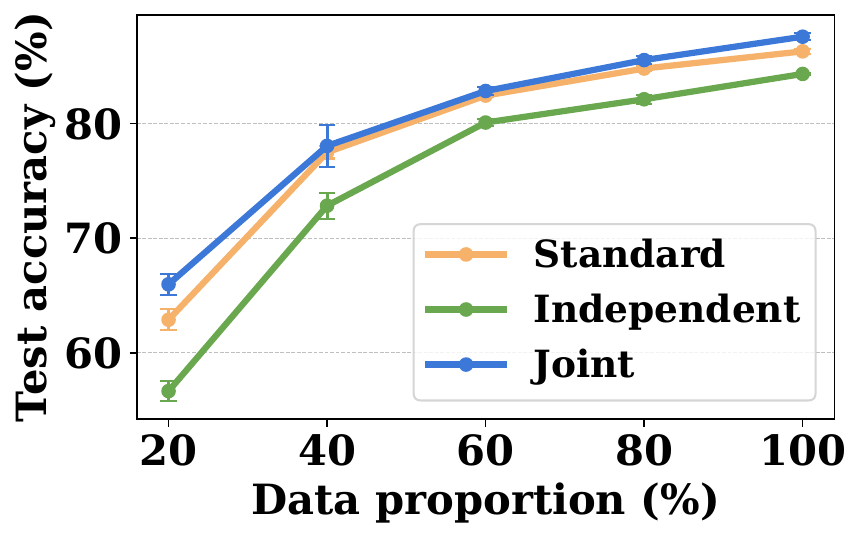}
\caption{CIFAR-100 Super-class}
\end{subfigure}
\caption{Data efficiency for concept bottleneck models with different learning strategies.}
\label{fig:data_efficiency}
\end{figure}

\begin{figure*}[t]
\centering
\begin{subfigure}{0.495\linewidth}
\includegraphics[width=1\linewidth]{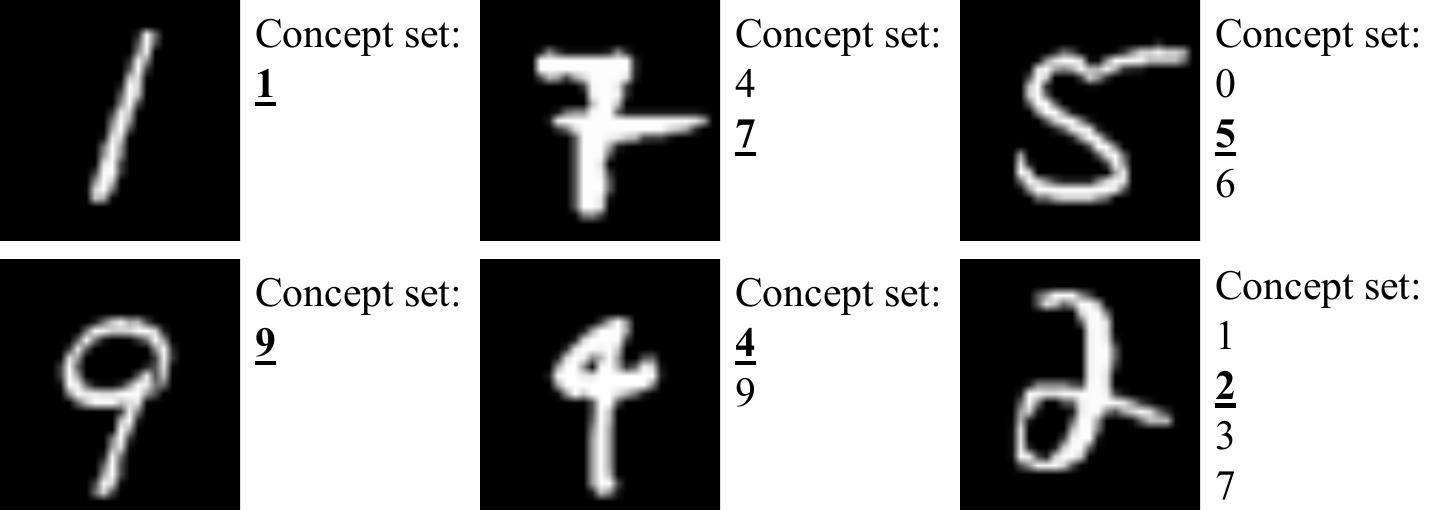}
\caption{MNIST}
\end{subfigure}
\begin{subfigure}{0.495\linewidth}
\includegraphics[width=1\linewidth]{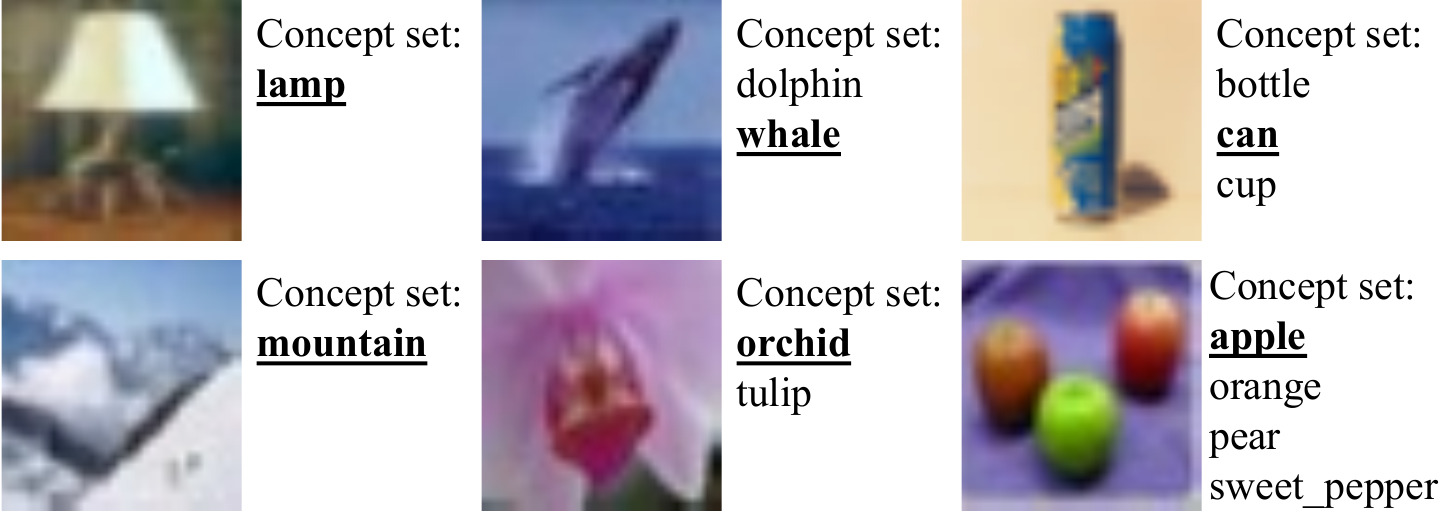}
\caption{CIFAR-100 Super-class}
\end{subfigure}
\caption{Concept conformal sets on MNIST and CIFAR-100 Super-class. Phrases in bold and underlined mean true concepts.}\label{fig:concept_conformal}
\end{figure*}

\textbf{Baselines.} In experiments, we adopt two baselines: \emph{Naive} baseline and \emph{Vanilla Conformal Predictors}. These baselines exclusively rely upon $\mathcal{X}$ and $\mathcal{Y}$. Specifically, the Naive baseline constructs the set sequentially incorporating classes in decreasing order of their probabilities until the cumulative probability surpasses the predefined threshold $1-\varepsilon$. Notably, this Naive approach lacks a theoretical guarantee of its coverage. The Vanilla Conformal Predictors utilize the final decisions to construct the prediction sets. Following \cite{cherubin2021exact}, we set the non-conformity measure for this Vanilla baseline as $s(x,y)=-f_{y}(x)$, where $f$ is a pre-trained model, and $f_{y}(x)$ is the softmax probability for  $x$ with true label $y$.

\textbf{Implementation details.} In experiments, we evaluate the proposed method (i.e., unSENN) across the following underlying models: ResNet-50 \cite{he2016deep}, a convolutional neural network (CNN), and a multi-layer perception (MLP). For the calibration set, we randomly hold out $10\%$ of the original available dataset to compute the non-conformity scores. All the experiments are conducted 10 times, and we report the mean and standard errors.

\begin{table}[t] 
\small
\centering
\begin{tabular}{c@{\hspace{10pt}}c@{\hspace{10pt}}cc} 
\toprule
\multirow{2}{*}{Dataset} & \multirow{2}{*}{\makecell{Miscoverage\\ rate $\varepsilon$}} & \multicolumn{2}{c}{Concept error rate} \\ 
\cmidrule(lr){3-4} & & unSENN & Naive\\ 
\midrule
\multirow{4}{*}{MNIST} & $0.05$ & $0.047 \pm 0.001 $ & $0.003 \pm 0.000 $\\
& $0.1$ & $0.094 \pm 0.002$ & $0.003 \pm 0.000 $ \\
& $0.15$ & $0.139 \pm 0.006$ & $0.004 \pm 0.000 $ \\
& $0.2$ & $0.174 \pm 0.011$ & $0.005 \pm 0.000 $\\
\midrule
\multirow{4}{*}{\makecell{CIFAR-100 \\ Super-class}} & $0.05$ & $0.044 \pm 0.002 $ & $0.073 \pm 0.001 $\\
& $0.1$ & $0.091 \pm 0.003$ & $0.100 \pm 0.001 $ \\
& $0.15$ & $0.137 \pm 0.004$ & $0.121 \pm 0.001 $ \\
& $0.2$ & $0.177 \pm 0.003$ & $0.136 \pm 0.001 $ \\
\bottomrule
\end{tabular}
\caption{Error rate of concept conformal sets on MNIST and CIFAR-100 Super-class with different miscoverage rates.}
\label{tab:error_rate}
\end{table}

\subsection{Experimental Results}
First, we measure the data efficiency (the amount of training data needed to achieve a desired level of accuracy) of the self-explaining model. For the CIFAR-100 Super-class, we apply ResNet-50 and MLP to learn the concepts and labels, respectively. Similarly, we use CNN and MLP for the MNIST. In Figure \ref{fig:data_efficiency}, we present the test accuracy of the self-explaining model using independent and joint learning strategies with various data proportions. Note that the standard model ignores the concepts and directly maps input features to the final prediction. As we can see, the self-explaining models are particularly effective on the adopted datasets. The joint model exhibits comparable performance to the standard model, and the joint model is slightly more accurate in lower data regimes (e.g., $20\%$). Therefore, in the following experiments, we adopt the joint learning strategy.

\begin{table}[t] 
\small
\centering
\begin{tabular}{cccc} 
\toprule
Dataset & Method & Label coverage & Average set size \\ 
\midrule
\multirow{3}{*}{MNIST} & unSENN & $0.988 \pm 0.002 $ & $1.00 \pm 0.00$\\
& Naive & $0.985 \pm 0.002$ & $1.01 \pm 0.00$\\
& Vanilla & $0.903 \pm 0.001$ & $0.90 \pm 0.00$\\
\midrule
\multirow{3}{*}{\makecell{CIFAR-100 \\ Super-class}} & unSENN & $0.847 \pm 0.002$ & $1.03 \pm 0.00$\\
& Naive & $0.786 \pm 0.004$ & $1.28 \pm 0.02$\\
& Vanilla & $0.805 \pm 0.006$ & $1.15 \pm 0.01$\\
\bottomrule
\end{tabular}
\caption{Label coverage and average set size of prediction conformal sets on MNIST and CIFAR-100 Super-class.}
\label{tab:coverage}
\end{table}

\begin{figure}[t]
\centering
\begin{subfigure}{0.485\linewidth}
\includegraphics[width=1\linewidth]{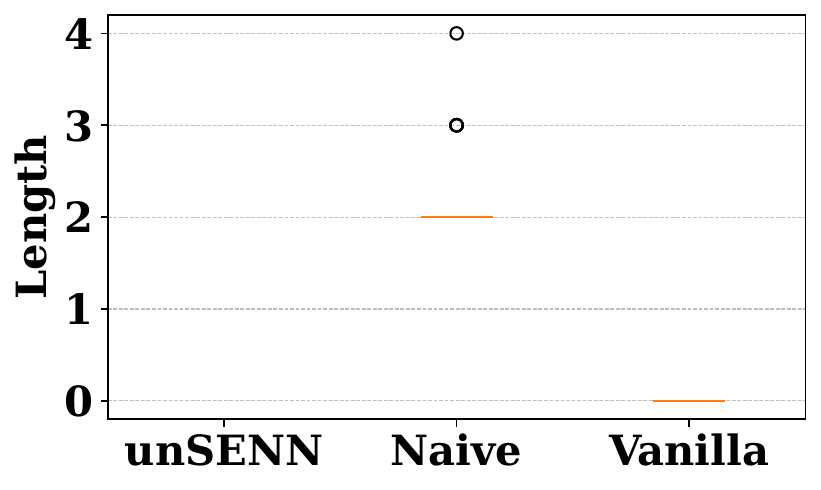}
\caption{MNIST}
\end{subfigure}
\begin{subfigure}{0.499\linewidth}
\includegraphics[width=1\linewidth]{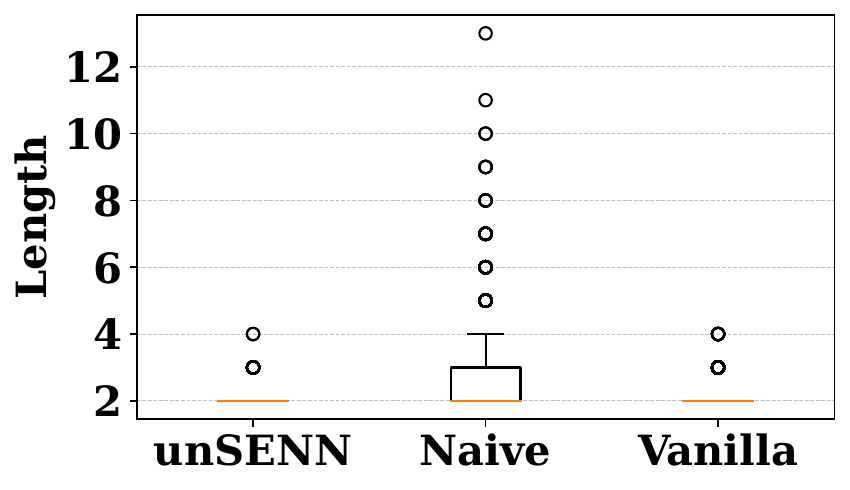}
\caption{CIFAR-100 Super-class}
\end{subfigure}
\caption{Bloxplot of prediction sets. We display the lengths of conformal sets that are not equal to 1 over all the test data.}
\label{fig:box_plot}
\end{figure}

\begin{figure*}[t]
\centering
\includegraphics[width=1\linewidth]{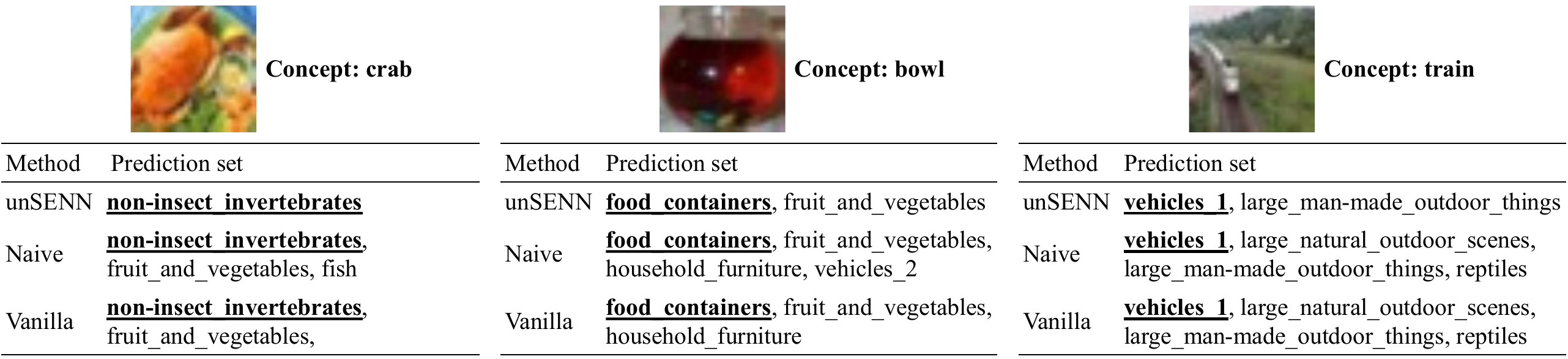}
\caption{Prediction conformal sets on CIFAR-100 Super-class. Phrases in bold and underlined mean true labels.}\label{fig:label_conformal}
\end{figure*}

Then, we explore the effectiveness of the concept conformal sets within the self-explaining model. In Table \ref{tab:error_rate}, we report the error rate (the percentage of true concepts that are not held in the prediction sets) of concept conformal sets with different miscoverage rates. From this table, we can find that our proposed method can guarantee a probability of error of at most $\varepsilon$ and fully exploits its error margin to optimize efficiency. In contrast, the Naive baseline simply outputs all possible concepts using the raw softmax probabilities and does not provide theoretical guarantees. In addition, we show concept conformal sets when $\varepsilon=0.1$ in Figure \ref{fig:concept_conformal}. Intuitively, a blurred image tends to have a large prediction set due to its uncertainty during testing, while a clear image tends to have a small set. Thus, a smaller set means higher confidence in the prediction, and a larger set indicates higher uncertainty regarding the predicted concepts.

Next, we investigate the effectiveness of the final prediction sets within the self-explaining model. We provide insights into the label coverage (the percentage of times the prediction set size is 1 and contains the true label) as well as the average set size. Table \ref{tab:coverage} reports the obtained experimental results when $\varepsilon=0.1$. From this table, we can see that our proposed method consistently outperforms both the Naive and Vanilla baselines. Specifically, our proposed method achieves notably higher label coverage and much tighter confidence sets. A more comprehensive understanding of the conformal set characteristics is illustrated in Figure \ref{fig:box_plot}, where we display the lengths of conformal sets that are not equal to 1. We can find that the Vanilla baseline contains empty sets on MNIST, and the Naive baseline includes much looser prediction sets on the CIFAR-100 Super-class. In contrast, our proposed method generates reliable and tight prediction sets. Additionally, in Figure \ref{fig:label_conformal}, we demonstrate prediction conformal sets when $\varepsilon=0.1$. We can observe that all the methods correctly output the true label, while the prediction set of our proposed method is the tightest. From these reported experimental results, we can conclude that our proposed method effectively produces uncertainty prediction sets for the final predictions, leveraging the informative high-level basis concepts.

\begin{figure}[t]
\centering
\begin{subfigure}{0.5\linewidth}
\includegraphics[width=1\linewidth]{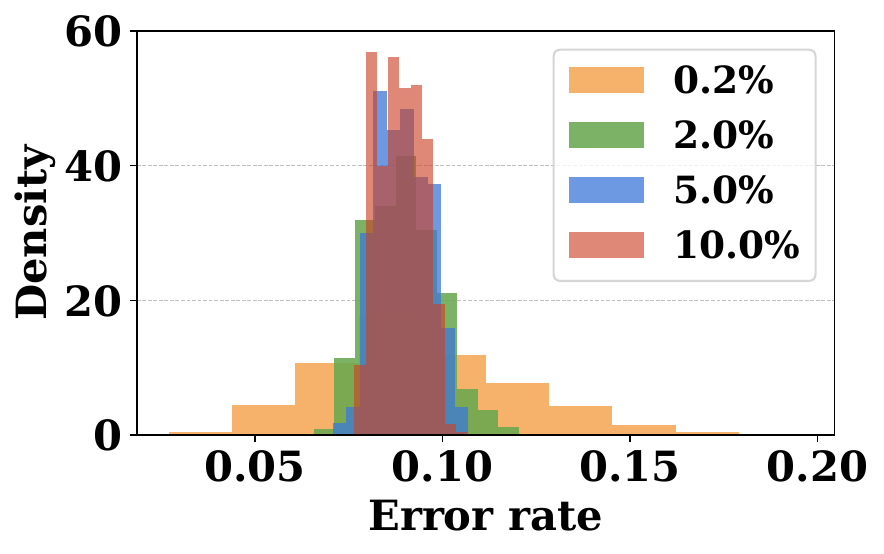}
\caption{MNIST}
\end{subfigure}
\begin{subfigure}{0.48\linewidth}
\includegraphics[width=1\linewidth]{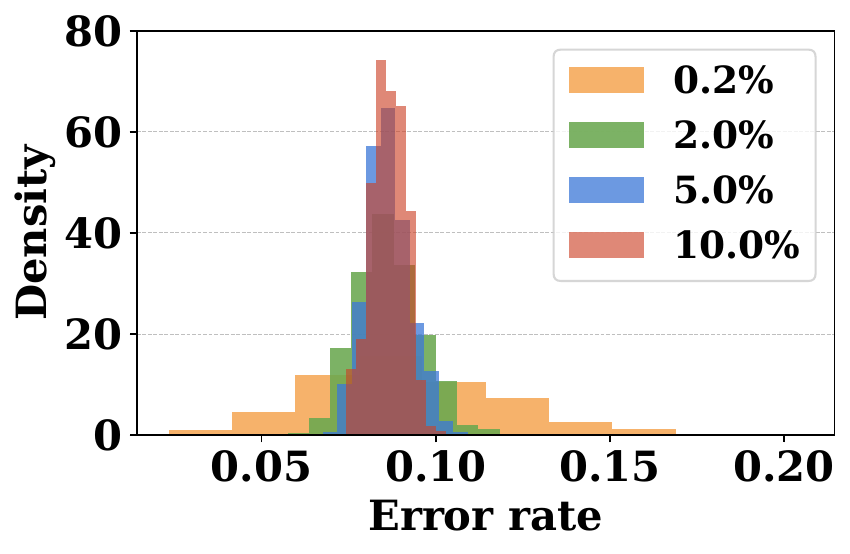}
\caption{CIFAR-100 Super-class}
\end{subfigure}
\caption{Distribution of error rate with different calibration sizes (splitting percentage of the original available data).}
\label{fig:cal_dis}
\end{figure}

Further, we perform an ablation study concerning the calibration data size. In Figure \ref{fig:cal_dis}, we illustrate the error rate distribution for the concept conformal sets with a miscoverage rate $\varepsilon=0.1$ on MNIST and CIFAR-100 Super-class, using different calibration data sizes. It is evident that utilizing a larger calibration set can enhance the stability of the prediction set. Therefore, we incorporate $10\%$ of the original available dataset for calibration without tuning, which also works well in our experiments.

\section{Related Work}
Currently, many self-explaining networks have been proposed \cite{koh2020concept,havasi2022addressing,chauhan2023interactive,yuksekgonul2022post,alvarez2018towards,li2018deep,kim2023probabilistic,zarlenga2023tabcbm}. However, they fail to provide distribution-free uncertainty quantification for the two simultaneously generated prediction outcomes in self-explaining networks. On the other hand, to tackle the uncertainty issues, traditional methods (e.g., Bayesian approximation) for establishing confidence in prediction models usually make strict assumptions and have high computational cost properties. In this work, we build upon conformal prediction \cite{gibbs2021adaptive,chernozhukov2018exact,xu2023sequential,tibshirani2019conformal,teng2022predictive,martinez2023approximating,ndiaye2022stable,lu2022fair,jaramillo2021shapley,liu2022conformalized}, which do not require strict assumptions on the underlying data distribution and has been applied before to classification and regression problems to attain marginal coverage. However, our work is different from existing conformal works in that we first model explanation prediction uncertainties in both scenarios – with and without true explanations. We also transfer the explanation uncertainties to the final prediction uncertainties through designing new transfer functions and associated optimization frameworks for constructing the final prediction sets.

\section{Conclusion}
In this paper, we design a novel uncertainty quantification framework for self-explaining networks, which generates distribution-free uncertainty estimates for the two interconnected outcomes (i.e., the final predictions and their corresponding explanations) without requiring any additional model modifications. Specifically, in our approach, we first design the non-conformity measures to capture the relevant basis explanations within the interpretation layer of self-explaining networks, considering scenarios with and without true concepts. Then, based on the designed non-conformity measures, we generate explanation prediction sets whose size reflects the prediction uncertainty. To get better label prediction sets, we also develop novel transfer functions tailored to scenarios where self-explaining networks are trained with and without true concepts. By leveraging the adversarial attack paradigm, we further design novel optimization frameworks to construct final prediction sets from the designed transfer functions. We provide the theoretical analysis for the proposed method. Comprehensive experiments validate the effectiveness of our method.

\section{Acknowledgments}
This work is supported in part by the US National Science Foundation under grants IIS-2106961, and IIS-2144338. Any opinions, findings, and conclusions or recommendations expressed in this material are those of the author(s) and do not necessarily reflect the views of the National Science Foundation.

\bibliography{main}

\begin{thebibliography}{53}
\providecommand{\natexlab}[1]{#1}

\bibitem[{Agarwal et~al.(2022)Agarwal, Krishna, Saxena, Pawelczyk, Johnson, Puri, Zitnik, and Lakkaraju}]{agarwal2022openxai}
Agarwal, C.; Krishna, S.; Saxena, E.; Pawelczyk, M.; Johnson, N.; Puri, I.; Zitnik, M.; and Lakkaraju, H. 2022.
\newblock Openxai: Towards a transparent evaluation of model explanations.
\newblock \emph{Advances in Neural Information Processing Systems}, 35: 15784--15799.

\bibitem[{Alvarez~Melis and Jaakkola(2018)}]{alvarez2018towards}
Alvarez~Melis, D.; and Jaakkola, T. 2018.
\newblock Towards robust interpretability with self-explaining neural networks.
\newblock \emph{Advances in neural information processing systems}, 31.

\bibitem[{Angelopoulos and Bates(2021)}]{angelopoulos2021gentle}
Angelopoulos, A.~N.; and Bates, S. 2021.
\newblock A gentle introduction to conformal prediction and distribution-free uncertainty quantification.
\newblock \emph{arXiv preprint arXiv:2107.07511}.

\bibitem[{Bang et~al.(2021)Bang, Xie, Lee, Wu, and Xing}]{bang2021explaining}
Bang, S.; Xie, P.; Lee, H.; Wu, W.; and Xing, E. 2021.
\newblock Explaining a black-box by using a deep variational information bottleneck approach.
\newblock In \emph{Proceedings of the AAAI Conference on Artificial Intelligence}, 11396--11404.

\bibitem[{Brunet, Anderson, and Zemel(2022)}]{brunet2022implications}
Brunet, M.-E.; Anderson, A.; and Zemel, R. 2022.
\newblock Implications of Model Indeterminacy for Explanations of Automated Decisions.
\newblock \emph{Advances in Neural Information Processing Systems}, 35: 7810--7823.

\bibitem[{Carlini and Wagner(2017)}]{carlini2017towards}
Carlini, N.; and Wagner, D. 2017.
\newblock Towards evaluating the robustness of neural networks.
\newblock In \emph{2017 ieee symposium on security and privacy (sp)}, 39--57. Ieee.

\bibitem[{Chauhan et~al.(2023)Chauhan, Tiwari, Freyberg, Shenoy, and Dvijotham}]{chauhan2023interactive}
Chauhan, K.; Tiwari, R.; Freyberg, J.; Shenoy, P.; and Dvijotham, K. 2023.
\newblock Interactive concept bottleneck models.
\newblock In \emph{Proceedings of the AAAI Conference on Artificial Intelligence}, 5948--5955.

\bibitem[{Chernozhukov, W{\"u}thrich, and Yinchu(2018)}]{chernozhukov2018exact}
Chernozhukov, V.; W{\"u}thrich, K.; and Yinchu, Z. 2018.
\newblock Exact and robust conformal inference methods for predictive machine learning with dependent data.
\newblock In \emph{Conference On learning theory}, 732--749. PMLR.

\bibitem[{Cherubin, Chatzikokolakis, and Jaggi(2021)}]{cherubin2021exact}
Cherubin, G.; Chatzikokolakis, K.; and Jaggi, M. 2021.
\newblock Exact optimization of conformal predictors via incremental and decremental learning.
\newblock In \emph{International Conference on Machine Learning}, 1836--1845. PMLR.

\bibitem[{Deng(2012)}]{deng2012mnist}
Deng, L. 2012.
\newblock The mnist database of handwritten digit images for machine learning research.
\newblock \emph{IEEE Signal Processing Magazine}, 29(6): 141--142.

\bibitem[{Elbaghdadi et~al.(2020)Elbaghdadi, Hussain, Hoenes, and Bardarov}]{elbaghdadi2020self}
Elbaghdadi, O.; Hussain, A.; Hoenes, C.; and Bardarov, I. 2020.
\newblock Self explaining neural networks: A review with extensions.
\newblock \emph{Fairness, Accountability, Confidentiality and Transparency in AI}.

\bibitem[{Fischer et~al.(2019)Fischer, Balunovic, Drachsler-Cohen, Gehr, Zhang, and Vechev}]{fischer2019dl2}
Fischer, M.; Balunovic, M.; Drachsler-Cohen, D.; Gehr, T.; Zhang, C.; and Vechev, M. 2019.
\newblock DL2: training and querying neural networks with logic.
\newblock In \emph{International Conference on Machine Learning}, 1931--1941. PMLR.

\bibitem[{Gibbs and Candes(2021)}]{gibbs2021adaptive}
Gibbs, I.; and Candes, E. 2021.
\newblock Adaptive conformal inference under distribution shift.
\newblock \emph{Advances in Neural Information Processing Systems}, 34: 1660--1672.

\bibitem[{Han, Srinivas, and Lakkaraju(2022)}]{han2022explanation}
Han, T.; Srinivas, S.; and Lakkaraju, H. 2022.
\newblock Which explanation should i choose? a function approximation perspective to characterizing post hoc explanations.
\newblock \emph{Advances in Neural Information Processing Systems}, 35: 5256--5268.

\bibitem[{Havasi, Parbhoo, and Doshi-Velez(2022)}]{havasi2022addressing}
Havasi, M.; Parbhoo, S.; and Doshi-Velez, F. 2022.
\newblock Addressing leakage in concept bottleneck models.
\newblock \emph{Advances in Neural Information Processing Systems}, 35: 23386--23397.

\bibitem[{He et~al.(2016)He, Zhang, Ren, and Sun}]{he2016deep}
He, K.; Zhang, X.; Ren, S.; and Sun, J. 2016.
\newblock Deep residual learning for image recognition.
\newblock In \emph{Proceedings of the IEEE conference on computer vision and pattern recognition}, 770--778.

\bibitem[{Hitzler and Sarker(2022)}]{hitzler2022human}
Hitzler, P.; and Sarker, M. 2022.
\newblock Human-centered concept explanations for neural networks.
\newblock \emph{Neuro-Symbolic Artificial Intelligence: The State of the Art}, 342(337): 2.

\bibitem[{Hong and Pavlic(2023)}]{hong2023concept}
Hong, J.; and Pavlic, T.~P. 2023.
\newblock Concept-Centric Transformers: Concept Transformers with Object-Centric Concept Learning for Interpretability.
\newblock \emph{arXiv preprint arXiv:2305.15775}.

\bibitem[{Huai et~al.(2022)Huai, Liu, Miao, Yao, and Zhang}]{huai2022towards}
Huai, M.; Liu, J.; Miao, C.; Yao, L.; and Zhang, A. 2022.
\newblock Towards automating model explanations with certified robustness guarantees.
\newblock In \emph{Proceedings of the AAAI Conference on Artificial Intelligence}, 6935--6943.

\bibitem[{Huai et~al.(2020)Huai, Wang, Miao, and Zhang}]{huai2020towards}
Huai, M.; Wang, D.; Miao, C.; and Zhang, A. 2020.
\newblock Towards interpretation of pairwise learning.
\newblock In \emph{Proceedings of the AAAI Conference on Artificial Intelligence}, 4166--4173.

\bibitem[{Huang et~al.(2020)Huang, Geiping, Fowl, Taylor, and Goldstein}]{huang2020metapoison}
Huang, W.~R.; Geiping, J.; Fowl, L.; Taylor, G.; and Goldstein, T. 2020.
\newblock Metapoison: Practical general-purpose clean-label data poisoning.
\newblock \emph{Advances in Neural Information Processing Systems}, 33: 12080--12091.

\bibitem[{Jaramillo and Smirnov(2021)}]{jaramillo2021shapley}
Jaramillo, W.~L.; and Smirnov, E. 2021.
\newblock Shapley-value based inductive conformal prediction.
\newblock In \emph{Conformal and Probabilistic Prediction and Applications}, 52--71. PMLR.

\bibitem[{Kim et~al.(2023)Kim, Jung, Park, Kim, and Yoon}]{kim2023probabilistic}
Kim, E.; Jung, D.; Park, S.; Kim, S.; and Yoon, S. 2023.
\newblock Probabilistic Concept Bottleneck Models.
\newblock \emph{arXiv preprint arXiv:2306.01574}.

\bibitem[{Koh et~al.(2020)Koh, Nguyen, Tang, Mussmann, Pierson, Kim, and Liang}]{koh2020concept}
Koh, P.~W.; Nguyen, T.; Tang, Y.~S.; Mussmann, S.; Pierson, E.; Kim, B.; and Liang, P. 2020.
\newblock Concept bottleneck models.
\newblock In \emph{International conference on machine learning}, 5338--5348. PMLR.

\bibitem[{Krizhevsky, Hinton et~al.(2009)}]{krizhevsky2009learning}
Krizhevsky, A.; Hinton, G.; et~al. 2009.
\newblock Learning multiple layers of features from tiny images.
\newblock \emph{University of Toronto}.

\bibitem[{Li et~al.(2021)Li, Kuang, Li, Chen, Zhang, Shao, and Xiao}]{li2021instance}
Li, J.; Kuang, K.; Li, L.; Chen, L.; Zhang, S.; Shao, J.; and Xiao, J. 2021.
\newblock Instance-wise or class-wise? a tale of neighbor shapley for concept-based explanation.
\newblock In \emph{Proceedings of the 29th ACM International Conference on Multimedia}, 3664--3672.

\bibitem[{Li et~al.(2018)Li, Liu, Chen, and Rudin}]{li2018deep}
Li, O.; Liu, H.; Chen, C.; and Rudin, C. 2018.
\newblock Deep learning for case-based reasoning through prototypes: A neural network that explains its predictions.
\newblock In \emph{Proceedings of the AAAI Conference on Artificial Intelligence}.

\bibitem[{Li et~al.(2020)Li, Feng, An, Ng, and Zhang}]{li2020mri}
Li, W.; Feng, X.; An, H.; Ng, X.~Y.; and Zhang, Y.-J. 2020.
\newblock Mri reconstruction with interpretable pixel-wise operations using reinforcement learning.
\newblock In \emph{Proceedings of the AAAI conference on artificial intelligence}, 792--799.

\bibitem[{Liu et~al.(2022)Liu, Ding, Yu, Liu, Kong, and Jiang}]{liu2022conformalized}
Liu, M.; Ding, L.; Yu, D.; Liu, W.; Kong, L.; and Jiang, B. 2022.
\newblock Conformalized Fairness via Quantile Regression.
\newblock \emph{Advances in Neural Information Processing Systems}, 35: 11561--11572.

\bibitem[{Lu et~al.(2022)Lu, Lemay, Chang, H{\"o}bel, and Kalpathy-Cramer}]{lu2022fair}
Lu, C.; Lemay, A.; Chang, K.; H{\"o}bel, K.; and Kalpathy-Cramer, J. 2022.
\newblock Fair conformal predictors for applications in medical imaging.
\newblock In \emph{Proceedings of the AAAI Conference on Artificial Intelligence}, 12008--12016.

\bibitem[{Lundberg and Lee(2017)}]{lundberg2017unified}
Lundberg, S.~M.; and Lee, S.-I. 2017.
\newblock A unified approach to interpreting model predictions.
\newblock \emph{Advances in neural information processing systems}, 30.

\bibitem[{Martinez et~al.(2023)Martinez, Bhatt, Weller, and Cherubin}]{martinez2023approximating}
Martinez, J.~A.; Bhatt, U.; Weller, A.; and Cherubin, G. 2023.
\newblock Approximating Full Conformal Prediction at Scale via Influence Functions.
\newblock In \emph{Proceedings of the AAAI Conference on Artificial Intelligence}, 6631--6639.

\bibitem[{Nam et~al.(2020)Nam, Gur, Choi, Wolf, and Lee}]{nam2020relative}
Nam, W.-J.; Gur, S.; Choi, J.; Wolf, L.; and Lee, S.-W. 2020.
\newblock Relative attributing propagation: Interpreting the comparative contributions of individual units in deep neural networks.
\newblock In \emph{Proceedings of the AAAI conference on artificial intelligence}, 2501--2508.

\bibitem[{Ndiaye(2022)}]{ndiaye2022stable}
Ndiaye, E. 2022.
\newblock Stable conformal prediction sets.
\newblock In \emph{International Conference on Machine Learning}, 16462--16479. PMLR.

\bibitem[{Rajagopal et~al.(2021)Rajagopal, Balachandran, Hovy, and Tsvetkov}]{rajagopal2021selfexplain}
Rajagopal, D.; Balachandran, V.; Hovy, E.; and Tsvetkov, Y. 2021.
\newblock Selfexplain: A self-explaining architecture for neural text classifiers.
\newblock \emph{arXiv preprint arXiv:2103.12279}.

\bibitem[{Ribeiro, Singh, and Guestrin(2016)}]{ribeiro2016should}
Ribeiro, M.~T.; Singh, S.; and Guestrin, C. 2016.
\newblock " Why should i trust you?" Explaining the predictions of any classifier.
\newblock In \emph{Proceedings of the 22nd ACM SIGKDD international conference on knowledge discovery and data mining}, 1135--1144.

\bibitem[{Rigotti et~al.(2021)Rigotti, Miksovic, Giurgiu, Gschwind, and Scotton}]{rigotti2021attention}
Rigotti, M.; Miksovic, C.; Giurgiu, I.; Gschwind, T.; and Scotton, P. 2021.
\newblock Attention-based interpretability with concept transformers.
\newblock In \emph{International Conference on Learning Representations}.

\bibitem[{Rudin(2018)}]{rudin2018please}
Rudin, C. 2018.
\newblock Please stop explaining black box models for high stakes decisions.
\newblock \emph{Stat}, 1050: 26.

\bibitem[{Sesia and Favaro(2022)}]{sesia2022conformal}
Sesia, M.; and Favaro, S. 2022.
\newblock Conformal frequency estimation with sketched data.
\newblock \emph{Advances in Neural Information Processing Systems}, 35: 6589--6602.

\bibitem[{Sinha et~al.(2023)Sinha, Huai, Sun, and Zhang}]{sinha2023understanding}
Sinha, S.; Huai, M.; Sun, J.; and Zhang, A. 2023.
\newblock Understanding and enhancing robustness of concept-based models.
\newblock In \emph{Proceedings of the AAAI Conference on Artificial Intelligence}, 15127--15135.

\bibitem[{Slack et~al.(2021)Slack, Hilgard, Singh, and Lakkaraju}]{slack2021reliable}
Slack, D.; Hilgard, A.; Singh, S.; and Lakkaraju, H. 2021.
\newblock Reliable post hoc explanations: Modeling uncertainty in explainability.
\newblock \emph{Advances in neural information processing systems}, 34: 9391--9404.

\bibitem[{Slack et~al.(2020)Slack, Hilgard, Jia, Singh, and Lakkaraju}]{slack2020fooling}
Slack, D.; Hilgard, S.; Jia, E.; Singh, S.; and Lakkaraju, H. 2020.
\newblock Fooling lime and shap: Adversarial attacks on post hoc explanation methods.
\newblock In \emph{Proceedings of the AAAI/ACM Conference on AI, Ethics, and Society}, 180--186.

\bibitem[{Teng, Tan, and Yuan(2021)}]{teng2021t}
Teng, J.; Tan, Z.; and Yuan, Y. 2021.
\newblock T-sci: A two-stage conformal inference algorithm with guaranteed coverage for cox-mlp.
\newblock In \emph{International Conference on Machine Learning}, 10203--10213. PMLR.

\bibitem[{Teng et~al.(2022)Teng, Wen, Zhang, Bengio, Gao, and Yuan}]{teng2022predictive}
Teng, J.; Wen, C.; Zhang, D.; Bengio, Y.; Gao, Y.; and Yuan, Y. 2022.
\newblock Predictive inference with feature conformal prediction.
\newblock \emph{arXiv preprint arXiv:2210.00173}.

\bibitem[{Tibshirani et~al.(2019)Tibshirani, Foygel~Barber, Candes, and Ramdas}]{tibshirani2019conformal}
Tibshirani, R.~J.; Foygel~Barber, R.; Candes, E.; and Ramdas, A. 2019.
\newblock Conformal prediction under covariate shift.
\newblock \emph{Advances in neural information processing systems}, 32.

\bibitem[{Xu and Xie(2023)}]{xu2023sequential}
Xu, C.; and Xie, Y. 2023.
\newblock Sequential predictive conformal inference for time series.
\newblock In \emph{International Conference on Machine Learning}, 38707--38727. PMLR.

\bibitem[{Yao et~al.(2022)Yao, Li, Li, Liu, Huai, Zhang, and Gao}]{yao2022concept}
Yao, L.; Li, Y.; Li, S.; Liu, J.; Huai, M.; Zhang, A.; and Gao, J. 2022.
\newblock Concept-Level Model Interpretation From the Causal Aspect.
\newblock \emph{IEEE Transactions on Knowledge and Data Engineering}.

\bibitem[{Yeh et~al.(2020)Yeh, Kim, Arik, Li, Pfister, and Ravikumar}]{yeh2020completeness}
Yeh, C.-K.; Kim, B.; Arik, S.; Li, C.-L.; Pfister, T.; and Ravikumar, P. 2020.
\newblock On completeness-aware concept-based explanations in deep neural networks.
\newblock \emph{Advances in neural information processing systems}, 33: 20554--20565.

\bibitem[{Yuksekgonul, Wang, and Zou(2022)}]{yuksekgonul2022post}
Yuksekgonul, M.; Wang, M.; and Zou, J. 2022.
\newblock Post-hoc Concept Bottleneck Models.
\newblock In \emph{The Eleventh International Conference on Learning Representations}.

\bibitem[{Zarlenga et~al.(2023)Zarlenga, Shams, Nelson, Kim, and Jamnik}]{zarlenga2023tabcbm}
Zarlenga, M.~E.; Shams, Z.; Nelson, M.~E.; Kim, B.; and Jamnik, M. 2023.
\newblock TabCBM: Concept-based Interpretable Neural Networks for Tabular Data.
\newblock \emph{Transactions on Machine Learning Research}.

\bibitem[{Zhang, Gao, and Ma(2023)}]{zhang2023learning}
Zhang, Y.; Gao, N.; and Ma, C. 2023.
\newblock Learning to Select Prototypical Parts for Interpretable Sequential Data Modeling.
\newblock In \emph{Proceedings of the AAAI Conference on Artificial Intelligence}, 6612--6620.

\bibitem[{Zhao et~al.(2023)Zhao, Qian, Shi, Huai, and Liu}]{zhao2023automated}
Zhao, C.; Qian, W.; Shi, Y.; Huai, M.; and Liu, N. 2023.
\newblock Automated Natural Language Explanation of Deep Visual Neurons with Large Models.
\newblock \emph{arXiv preprint arXiv:2310.10708}.

\bibitem[{Zhao et~al.(2021)Zhao, Ma, Wang, Bailey, Li, and Jiang}]{zhao2021deep}
Zhao, S.; Ma, X.; Wang, Y.; Bailey, J.; Li, B.; and Jiang, Y.-G. 2021.
\newblock What do deep nets learn? class-wise patterns revealed in the input space.
\newblock \emph{arXiv preprint arXiv:2101.06898}.

\end{thebibliography}

\clearpage

\section{Appendix}
\subsection{Loss Functions for the Adopted Self-explaining Neural Networks}
In this section, we detail the loss functions for the adopted self-explaining neural networks (concept bottleneck models \cite{koh2020concept,havasi2022addressing,chauhan2023interactive,yuksekgonul2022post}, and prototype-based self-explaining networks \cite{zhang2023learning,alvarez2018towards, huai2022towards, li2018deep}).

\textbf{Loss functions for concept bottleneck models.} Concept bottleneck models (CBMs) are interpretable neural networks that first predict labels for human-interpretable concepts relevant to the prediction task, and then predict the final label based on the concept label predictions \cite{koh2020concept,havasi2022addressing,chauhan2023interactive}. We denote raw input features by $x$, the labels by $y$, and the pre-defined true concepts by $c \in \mathbb{R}^{C}$. Given the observed training samples $D^{tra}=\{(x_i,c_i,y_i)\}_{i=1}^{N^{tra}}$, a concept bottleneck model \cite{koh2020concept} can be trained through using the concept and class prediction losses via two distinct training approaches: \emph{independent} and \emph{joint}. The two training paradigms are briefly discussed below.
\begin{itemize}
    \item \emph{Joint training:} This joint training strategy learns both the concept and prediction models ($h$ and $g$) by minimizing both concept and classification losses jointly in an end-to-end manner. Mathematically, it aims to minimize the following overall training objective
    \begin{align}
        & L_{g,h}= \sum_{i=1}^{N^{tra}} [\ell (g(h(x_{i});y_{i})) \notag \\
        &\qquad\qquad + \xi_{1} * \sum_{j=1}^{C} \ell ({h}_{j}(x_{i});c_{i,j})],
    \end{align}
    where $\xi_{1}$ is a pre-defined parameter, $N^{tra}$ is the number of training samples. In the above, the first and second terms represent the classification and concept losses for the $i$-th training sample with $C$ total number of concepts, respectively. In our experiments, we set $\xi_1=1$.

    \item \emph{Independent training:} This independent training strategy learns the concept predictor $h$ and the class prediction component $g$ independently as follows 
    \begin{align}
        &\quad g = \arg\min_{g} \sum_{i=1}^{N^{tra}} \ell( g(c_i); y_{i}), \\
        & h = \arg\min_{h} \sum_{i=1}^{N^{tra}} \sum_{j=1}^{C} \ell ({h}_{j}(x_{i});c_{i,j}),
    \end{align}
    where $N^{tra}$ denotes the number of training samples, $g$ is trained using the true concepts during the training stage. Note that during the test stage, for a given test sample $x$, the final prediction function $g$ still takes $h(x)$ as its input.
    
\end{itemize}
Note that for concept bottleneck models \cite{koh2020concept,havasi2022addressing,chauhan2023interactive}, the standard training strategy ignores the concepts and directly maps input features to the final classification prediction.

\textbf{Loss functions for prototype-based self-explaining neural networks.} Different from concept bottleneck models \cite{koh2020concept,havasi2022addressing,chauhan2023interactive}, prototype-based self-explaining neural networks usually directly learn a set of prototype-based concepts directly from the training data (i.e., $D^{tra}=\{(x_i,y_i)\}_{i=1}^{N^{tra}}$) during the training process without the pre-defined concept supervision information. These prototype-based concepts are extracted in a way that they can best represent specific target sets \cite{zhang2023learning}. Specifically, they usually adopt the autoencoder network to learn a lower-dimension latent representation of the data with an encoder network, and then use several network layers over the latent space to learn $C$ prototype-based concepts, i.e., $\{p_{j}\in \mathbb{R}^{q}\}_{j=1}^{C}$. After that, for each $p_{j}$, the similarity layer computes its distance from the learned latent representation (i.e., $e_{enc}(x)$) as $h_{j}(x)=|| e_{enc}(x)-p_{j} ||_{2}^{2}$. The smaller the distance value is, the more similar $e_{enc}(x)$ and the $j$-th prototype ($p_{j}$) are. Finally, we can output a probability distribution over the $M$ classes. Let $R(x,\tilde{x})$ denote the reconstruction loss (e.g., $R(x,\tilde{x})=||x-\tilde{x}||_{2}$ that calculates the distance between the original and reconstructed input for penalizing the autoencoder’s reconstruction error.). To obtain the prototype-based self-explaining neural networks \cite{li2018deep}, we can minimize the following loss
\begin{align}
    & \sum_{i=1}^{N^{tra}} \ell (g(h(x_i));y_{i}) + \xi_{2} R(x_{i},\tilde{x}_{i}) \notag \\
    &\quad + \xi_{3} E_{1} (\{p_{j}\in \mathbb{R}^{q}\}_{j=1}^{C}) + \xi_{4} E_{2} (\{p_{j}\in \mathbb{R}^{q}\}_{j=1}^{C}), \notag
\end{align}
where $\xi_{2}$ is a pre-defined trade-off parameter, and $E_1$ and $E_{2}$ are two interpretability regularization terms. In \cite{li2018deep}, the minimization of $E_1$ enforces that each prototype-based concept be as close as possible to at least one of the training examples in the latent representation space, and the minimization of $E_2$ would require every encoded training example to be as close as possible to one of the prototype-based concepts. In the above, $\sum_{i=1}^{N^{tra}} R(x_{i},\tilde{x}_{i})$ is the autoencoder's reconstruction error on the given training dataset. In our experiments, we set $\xi_{2}=1$, $\xi_{3}=1$, and $\xi_{4}=1$.

\subsection{Proof of Theorem 3}
\begin{manualtheorem} {3} \label{thm:LabelSetThm3}
    Suppose the calibration samples and the test sample are exchangeable, if we construct $ \Gamma^{\varepsilon}_{lab}(x^{test})$ as indicated above, the following inequality holds for any $\varepsilon \in (0,1)$
    \begin{align}
    & P(y^{test} \in \Gamma^{\varepsilon}_{lab} (x^{test}) ) \geq 1-\varepsilon,
    \end{align}
    where $y^{test}$ is the true label of $x^{test}$, and $\Gamma^{\varepsilon}_{lab} (x^{test})$ is the label prediction set derived based on Eqn. (7) in the main manuscript.
\end{manualtheorem}

\begin{proof}
    To simplify notations, in the following, we use 
    $\tilde{D}=\{(x_i,c_i,y_i)\}_{i=1}^{N^{tra}} \cup \{(x^{test},c^{test},y^{test})\}$, where $N^{tra}$ is the number of training samples. Below, we will prove that for any given function $B: \mathcal{X} \times \mathcal{C} \rightarrow \mathbb{R}$ that is independent of data points in $\tilde{D}$, we have that $B(x_{i},c_{i})$ are exchangeable. Let $\tau$ denote the random permutation of the integers $1,\cdots,N^{tra}, (N^{tra}+1)$. Below, we use $C_{B^{-1}}(u-)$ to denote the set $\{ (x,c): B(x,c) \leq u \}$ (i.e., $C_{B^{-1}}(u-) = \{ (x,c): B(x,c) \leq u \}$). Then, we can have
    \begin{align}
        & F_{v} (u_{1},\cdots,u_{\tilde{N}} | D_{tra}) \notag \\
        & = P(B(x_{1},c_1)\leq u_1, \cdots, B(x_{\tilde{N}},c_{\tilde{N}})\leq u_{\tilde{N}} | D_{tra} ) \notag \\
        &= P ((x_{1},c_{1}) \in C_{B^{-1}}(u_{1}-), \cdots, (x_{\tilde{N}},c_{\tilde{N}}) \notag \\
        & \qquad \qquad \in C_{B^{-1}}(u_{\tilde{N}}-) | D_{tra} ) \notag \\
        & =P ((x_{\tau(1)},c_{\tau(1)}) \in C_{B^{-1}}(u_{1}-), \cdots, (x_{\tau(\tilde{N})},c_{\tau(\tilde{N})}) \notag \\
        & \qquad \qquad \in C_{B^{-1}}(u_{\tilde{N}}-) | D_{tra} ) \notag \\
        &= P (B (x_{\tau(1)},c_{\tau(1)}) \leq u_{1}, \cdots, B (x_{\tau(\tilde{N})}, \notag \\
        & \qquad \qquad c_{\tau(\tilde{N})}) \leq u_{\tilde{N}}  | D_{tra} ) \notag \\
        & = F^{\tau}_{v} (u_{1},\cdots, u_{\tilde{N}}  | D_{tra} ),
    \end{align}
    where $\tilde{N}=N^{tra}+1$, $F_{v} (u_{1},\cdots,u_{\tilde{N}} | D_{tra})$ is the CDF, $\tau$ denotes a random perturbation, $F^{\tau}_{v} (u_{1},\cdots, u_{\tilde{N}}  | D_{tra} )$ denotes the perturbation CDF, and $C_{h^{-1}}(u-) = \{ (x,c): B(x,c) \leq u \}$.

    From the above, we can know that the proposed non-conformity score function is independent of the dataset $\tilde{D}$. The reason is that in the proposed non-conformity score function, we only use the information of $g$ and $h$ that is dependent on the training set $D^{tra}$ instead of $\tilde{D}$. In addition, when calculating the non-conformity score for each sample originating from $\tilde{D}$, there is no requirement to access information from the calibration samples for any other instances.

    Based on the above, we can conclude that the non-conformity scores on $\tilde{D}$ are exchangeable. Based on the Lemma 2 in the main manuscript, we have that the theoretical guarantee (i.e., $P(y^{test} \in \Gamma^{\varepsilon}_{lab} (x^{test}) ) \geq 1-\varepsilon$) holds for any $\varepsilon \in (0,1)$.    
\end{proof}

\subsection{Discussions on Prototype-based Self-explaining Neural Networks}
In this section, we will give more details about how to generalize our above method to the prototype-based self-explaining networks, which do not depend on the pre-defined expert concepts and directly learn a set of prototype-based concepts from the training data during training \cite{alvarez2018towards,li2018deep}. They usually adopt the autoencoder network to learn a lower-dimension latent representation of the data with an encoder network, and then use several network layers over the latent space to learn $C$ prototype-based concepts, i.e., $\{p_{j}\in \mathbb{R}^{q}\}_{j=1}^{C}$. After that, for each $p_{j}$, the similarity layer computes its distance from the learned latent representation (i.e., $e_{enc}(x)$) as $h_{j}(x)=|| e_{enc}(x)-p_{j} ||_{2}^{2}$. The smaller the distance value is, the more similar $e_{enc}(x)$ and the $j$-th prototype ($p_{j}$) are. Finally, we can output a probability distribution over the $M$ classes. Nonetheless, we have no access to the true basis explanations if we want to conduct conformal predictions at the prototype-based interpretation level. To address this, for prototype-based self-explaining networks, we construct the following non-conformity measure
\begin{align}
\label{eq:ApNonconforScoreFAutoEn}
    & s(x_i,y_{i},f)=  \inf_{v \in \{v: \ell(g(v),y_{i}) \leq \alpha\} } [\sum_{j=1}^{K} |h_{[j]}(x_{i})-v_{[j]}| \notag \\
    &\qquad\qquad + \lambda_{2} * \sum_{j=K+1}^{C} |h_{[j]}(x_{i})-v_{[j]}| ],
\end{align}
\noindent where $f=g \circ h$, $h(x_{i}) \in \mathbb{R}^{C}$ denotes the similarity vector for $x_{i}$, $\ell(g(v),y_{i})$ is the classification cross-entropy loss for $v$, and $\alpha$ is a predefined small threshold value to make the prediction based on $v$ to be close to label $y_i$. 

Without loss of generality, in the following, we consider the case where $\lambda_{2}=1$. In this case, we can rewrite the above non-conformity measure as
\begin{align}
    & s(x_i,y_{i},f)= \inf_{v \in \{v: \ell(g(v),y_{i}) \leq \alpha\} } ||v-h(x_{i})||_{1},
\end{align}
Note that the extension to other cases can be straightforwardly derived. Based on the above non-conformity measure, we can calculate the non-conformity scores for the calibration samples $D^{cal}$, and then calculate the desired quantile value $Q_{1-\varepsilon}$. In Algorithm \ref{alg:NonconfScores}, we give the detailed procedure for calculating an upper bound of the non-conformity score for a given sample $(x_i,y_i)$.

\begin{algorithm}[!t]
    \caption{Non-conformity score}
    \label{alg:NonconfScores}
    \begin{algorithmic}[1]
        \STATE \textbf{Input:} Data sample $(x_i,y_i)$, pre-trained model $f=g \circ h$, step size $\eta$, and number of steps $O$.
        \STATE \textbf{Output:} The non-conformity score $s(x_i,y_i,f)$.
        \STATE $u \leftarrow h(x_i)$
        \STATE $o \leftarrow 0$
        \WHILE{$o < O$} 
            \STATE $u \leftarrow u - \eta \frac{\partial ||\ell(g(u),y_i) -\alpha||_{1}}{\partial u}$
            \STATE $o \leftarrow o+1$
        \ENDWHILE
        \STATE {\bfseries Return:} $s(x_i,y_i,f)=||u-h(x_{i})||_{1}$.
    \end{algorithmic}
\end{algorithm}

Then, for the given test sample $x^{test}$, we calculate its final prediction set for $x^{test}$ as
\begin{align}
    & \Gamma^{\varepsilon}_{lab} (x^{test})  = \{y=g(v): ||v-h(x^{test})||_{1} \leq Q_{1-\varepsilon}\}, \notag 
\end{align}
\noindent where the quantile value $Q_{1-\varepsilon}$ is derived based on the non-conformity measure in Eqn. (\ref{eq:ApNonconforScoreFAutoEn}). To derive the label sets, we propose to solve the following optimization
\begin{align}
\label{eq:ApLabelSetOpt22}
    & \quad \max_{v \in \mathbb{R}^{C}} \sum_{m' \in [M]\setminus m} \mathbb{I} [g_{m}(v) > g_{m'} (v)] \\
    &\quad\quad s.t., ||v-h(x^{test})||_{1} \leq Q_{1-\varepsilon},\notag 
\end{align}
\noindent where the quantile value $Q_{1-\varepsilon}$ is derived based on Eqn. (\ref{eq:ApNonconforScoreFAutoEn}). In the above, we want to find a $v$, whose predicted label is ``$m$'', under the constraint that $||v-h(x^{test})||_{1} \leq Q_{1-\varepsilon}$. If we can successfully find such a $v$, we will include the label ``$m$'' in the label set; otherwise, we will exclude it. Notably, we need to iteratively solve the above optimization over all possible labels to construct the final prediction set.

\begin{table*}[htbp] 
\centering
\begin{tabular}{c||c||c} 
\toprule
Label ID & Super-class (coarse label) & Concept (sparse label) \\ 
\midrule
0 & aquatic\_mammals & beaver, dolphin, otter, seal, whale \\
1 & fish & aquarium\_fish, flatfish, ray, shark, trout \\
2 & flowers & orchid, poppy, rose, sunflower, tulip \\
3 & food\_containers & bottle, bowl, can, cup, plate\\
4 & fruit\_and\_vegetables & apple, mushroom, orange, pear, sweet\_pepper \\
5 & household\_electrical\_devices & clock, keyboard, lamp, telephone, television \\
6 & household\_furniture & bed, chair, couch, table, wardrobe \\
7 & insects & bee, beetle, butterfly, caterpillar, cockroach \\
8 & large\_carnivores & bear, leopard, lion, tiger, wolf \\
9 & large\_man-made\_outdoor\_things & bridge, castle, house, road, skyscraper \\
10 & large\_natural\_outdoor\_scenes & cloud, forest, mountain, plain, sea \\
11 & large\_omnivores\_and\_herbivores & camel, cattle, chimpanzee, elephant, kangaroo \\
12 & medium\_mammals & fox, porcupine, possum, raccoon, skunk \\
13 & non-insect\_invertebrates & crab, lobster, snail, spider, worm \\
14 &people & baby, boy, girl, man, woman \\
15 & reptiles & crocodile, dinosaur, lizard, snake, turtle \\
16 & small\_mammals & hamster, mouse, rabbit, shrew, squirrel \\
17 & trees & maple\_tree, oak\_tree, palm\_tree, pine\_tree, willow\_tree \\
18 & vehicles\_1 & bicycle, bus, motorcycle, pickup\_truck, train \\
19 & vehicles\_2 & lawn\_mower, rocket, streetcar, tank, tractor \\
\bottomrule
\end{tabular}
\caption{CIFAR-100 Super-class label mapping. The 100 classes in the CIFAR-100 are grouped into 20 super-classes.}
\label{tab:cifar100_super-class}
\end{table*}

However, it is difficult to directly optimize the above problem, due to the non-differentiability of the discrete components in the above equation. Below, we use $\delta=[\delta_{1},\cdots,\delta_{C}]$ to denote the difference between $v$ and $h(x^{test})$, i.e., $\delta =v-h(x^{test})$. Based on this, we can have
\begin{align}
\label{eq:ApVTransform}
    & v=[v_{1}=\delta_{1}+h_{1}(x^{test}),\cdots, \\
    &\qquad\quad v_{C}=\delta_{C}+h_{C}(x^{test})]. \notag
\end{align}
\noindent To address the above challenge, we propose to solve the below reformulated optimization
\begin{align}
\label{eq:ApDeltaOptima}
  &  \min_{||\delta ||_{1} \leq Q_{1-\varepsilon}} \max (\max_{m'\neq m} g_{m'}(v) - g_{m}(v),-\beta),
\end{align}
\noindent where $m' \in [M]$, $\beta$ is a pre-defined value, $v$ is derived based on Eqn. (\ref{eq:ApVTransform}), and $g: \mathbb{R}^{C} \rightarrow \mathbb{R}^{M}$ maps $v$ into a final class prediction. $g(v)$ represents the logit output of the underlying self-explaining model for each class when the input is $v$. The above adversarial loss enforces the actual prediction of $v$ to the targeted label ``$m$'' \cite{huang2020metapoison,carlini2017towards}. If we can find such a $v$, the label ``$m$'' will be included in the prediction set; otherwise, it will not be included in the final prediction sets.

\subsection{More Experimental Details}
\textbf{Real-world datasets.} In experiments, we adopt the following real-world datasets: \textbf{CIFAR-100 Super-class} \cite{fischer2019dl2} and \textbf{MNIST} \cite{deng2012mnist}.
\begin{itemize}
    \item The CIFAR-100 Super-class dataset is a variant of the CIFAR-100 \cite{krizhevsky2009learning} image dataset. The CIFAR-100 dataset consists of 60,000 color images, each with dimensions of $32\times32$ pixels, distributed across 100 distinct classes. Each class has 600 images, with 500 training images and 100 testing images. In the super-class version, the 100 classes of images are further grouped into 20 super-classes. For instance, the five classes \emph{baby, boy, girl, man} and \emph{woman} belong to the super-class \emph{people}. In this case, we exploit each image class as a concept-based explanation for the super-class prediction \cite{hong2023concept}. The detailed information about super-classes and the concepts can be found in Table \ref{tab:cifar100_super-class}.
    
    \item The MNIST dataset consists of a collection of grayscale images of handwritten digits (0-9). Each digit is represented as a 28$\times$28 pixel image. Each image is a small square with pixel values ranging from 0 to 255, indicating the intensity of the grayscale color. There are 60,000 training images and 10,000 testing images in the dataset. In experiments, we consider the MNIST range classification task, wherein the goal is to classify handwritten digits into five distinct classes based on their numerical ranges. Specifically, the digits are divided into five non-overlapping ranges (i.e., $[0,1], [2,3], [4,5],[6,7], [8,9]$), each assigned a unique label (the label set is denoted as $\{0, 1, 2, 3, 4\}$). For example, an input image depicting a handwritten digit ``4'' is classified as class 2 within the range of digits 4 to 5. In this case, we exploit the identity of each digit as a concept-based explanation for the ranges prediction \cite{rigotti2021attention}. 
\end{itemize}

\begin{table}[t] 
\small
\centering
\begin{tabular}{ccc} 
\toprule
Dataset & Calibration set size (\%) & Running time (s) \\ 
\midrule
\multirow{4}{*}{MNIST} & 5 & $0.77 \pm 0.01 $\\
& 10 & $1.52 \pm 0.01$\\
& 15 & $2.28 \pm 0.01$\\
& 20 & $3.04 \pm 0.01$\\
\midrule
\multirow{4}{*}{\makecell{CIFAR-100 \\ Super-class}} & 5 & $1.25 \pm 0.04$\\
& 10 & $2.41 \pm 0.02$\\
& 15 & $3.60 \pm 0.04$\\
& 20 & $4.78 \pm 0.04$\\
\bottomrule
\end{tabular}
\caption{Running time for computing non-conformity score on different calibration set sizes (splitting percentage of the original available data).}
\label{tab:time_score}
\end{table}

\begin{table}[t] 
\small
\centering
\begin{tabular}{ccc} 
\toprule
Dataset & Test set size (\%) & Running time (s) \\ 
\midrule
\multirow{5}{*}{MNIST} & 20 & $2.35 \pm 0.02 $\\
& 40 & $4.68 \pm 0.05$\\
& 60 & $7.06 \pm 0.06$\\
& 80 & $9.38 \pm 0.09$\\
& 100 & $11.75 \pm 0.12$\\
\midrule
\multirow{5}{*}{\makecell{CIFAR-100 \\ Super-class}} & 20 & $21.03 \pm 0.23$\\
& 40 & $41.91 \pm 0.39$\\
& 60 & $62.81 \pm 0.66$\\
& 80 & $83.73 \pm 0.83$\\
& 100 & $104.74 \pm 1.09$\\
\bottomrule
\end{tabular}
\caption{Running time for computing concept conformal sets with different test set sizes.}
\label{tab:time_concept_set}
\end{table}

\textbf{Parameter settings.} In experiments, we apply various machine learning models to evaluate our proposed method on the adopted real-world datasets. Specifically, for the CIFAR-100 Super-class dataset, we train a self-explaining model that leverages ResNet-50 to learn the underlying concepts and a multi-layer perception (MLP) to learn the labels. The training process involves an SGD optimizer with an initial learning rate of 0.1, a batch size of 128, and a total of 300 epochs. The learning rate is further decayed by $10\times$ at the 150-epoch and 225-epoch marks. For the MNIST dataset, we train a self-explaining model that uses a convolutional neural network (CNN) to learn the concepts and an MLP to learn the labels. The CNN architecture incorporates two convolutional layers with 16 and 32 kernels of size 5. These layers are complemented by ReLU activation functions and max-pooling layers. The training process also involves an SGD optimizer with a learning rate of 0.1, a batch size of 128, and a total of 100 epochs. In our experimental setup, we randomly split the original available dataset into $90\%$ as the training set and $10\%$ as the calibration set. We consider the top-ranked concepts and set $\lambda_2 = 0$ to compute the non-conformity scores. We repeat the experiments 10 times and report the means and standard errors of the results.

\textbf{Computing infrastructure.} The experiments are implemented using the PyTorch framework and run on a Linux server. The Linux server configures a GPU machine with AMD EPYC 7513 32-core 2.6GHz CPUs and Nvidia A100 PCIe 40GB GPUs.

\subsection{More Experimental Results}

\textbf{Running time analysis.} Here, we provide a running time analysis of the self-explaining model we employed in the main manuscript. In Table \ref{tab:time_score}, we report the running time for computing the non-conformity score on different calibration set sizes. Literally, the running time to compute the non-conformity score is related to the size of the calibration set. Larger calibration sets take more computation time. In Table \ref{tab:time_concept_set}, we report the running time for computing the concept conformal sets with different test set sizes. Overall, the computational cost of our proposed method is reasonable.

\begin{figure}[t]
\centering
\begin{subfigure}{0.7\linewidth}
\includegraphics[width=1\linewidth]{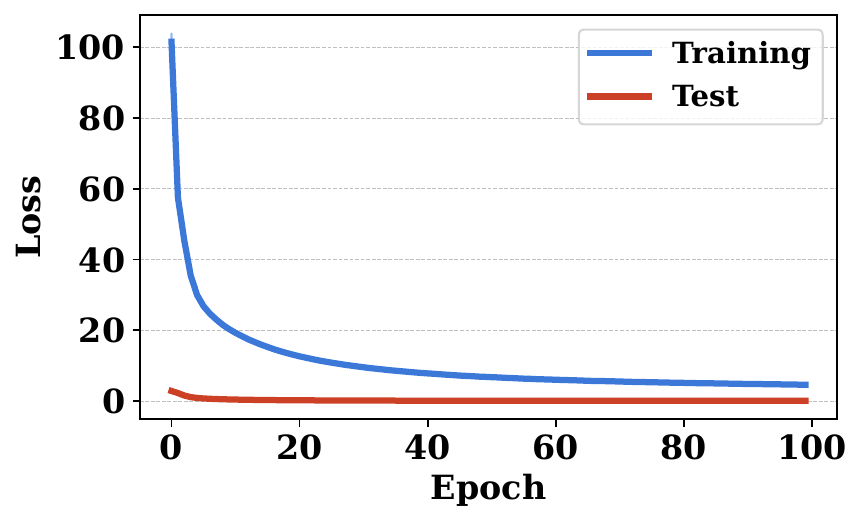}
\caption{Loss}
\end{subfigure}
\hfill
\begin{subfigure}{0.7\linewidth}
\includegraphics[width=1\linewidth]{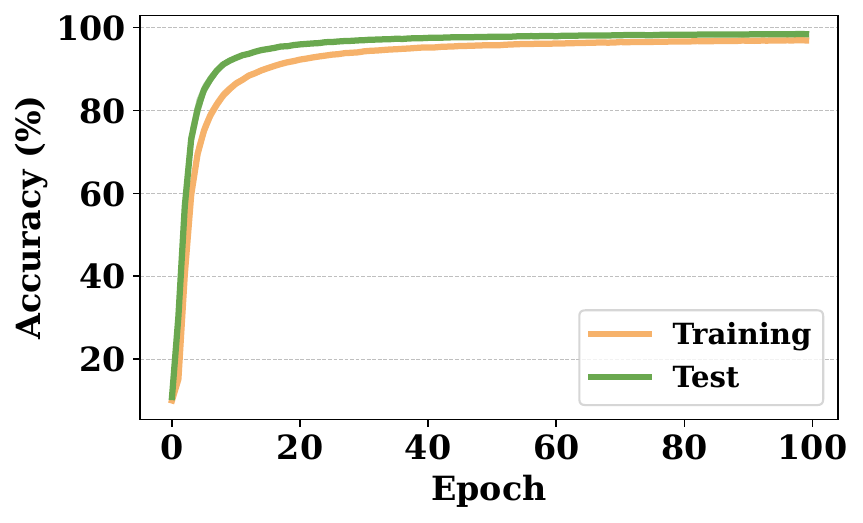}
\caption{Accuracy}
\end{subfigure}
\caption{Training and test performance of the prototype-based self-explaining model on MNIST.}
\label{fig:training_test}
\end{figure}

\textbf{More experimental results on prototype-based self-explaining neural networks.} First, we explore the training and test performance of the prototype-based self-explaining model on the MNIST dataset. The prototype-based self-explaining model consists of an autoencoder with four convolutional layers in both the encoder and decoder. The four convolutional layers in the encoder each use a kernel of size 3. Each output of the first three layers has 32 feature maps, while the last layer has 10 feature maps. We train the model for a total of 100 epochs using a learning rate of 0.001. Figure \ref{fig:training_test} shows the experimental results in terms of training loss, test loss, training accuracy, and test accuracy. Evidently, the training loss and test loss exhibit a consistent downward trend as the epochs progress, eventually converging at an early stage. After training for 100 epochs, the model achieves $96.9\%$ accuracy on the MNIST training set and $98.4\%$ accuracy on the MNIST test set. Next, we demonstrate the non-conformity score distribution of the prototype-based self-explaining model on the MNIST dataset. Here, we randomly split $10\%$ of the original available dataset as the calibration set. We choose a miscoverage rate $\varepsilon=0.1$. The outcome is visualized in Figure \ref{fig:cal_score}. As we can see, the non-conformity scores follow a normal distribution, which implies that the non-conformity measure is reasonable. Then, in Table \ref{tab:coverage_mnist}, we present the label coverage (the percentage of times the prediction set size is 1 and contains the true label) and average set size for the prototype-based self-explaining model on the MNIST dataset. The results reveal that our proposed method (i.e., unSENN) consistently outperforms both the Naive and Vanilla baselines. For example, our proposed method achieves a label coverage of 0.984 with an average set size of 1.00, while the Naive baseline only reaches a label coverage of 0.922 with an average set size of 1.11. Overall, we can find that our proposed method is effective in producing uncertainties for the final predictions on prototype-based self-explaining neural networks, leveraging the informative prototype-based concepts.

\begin{figure}[t]
\centering
\includegraphics[width=0.7\linewidth]{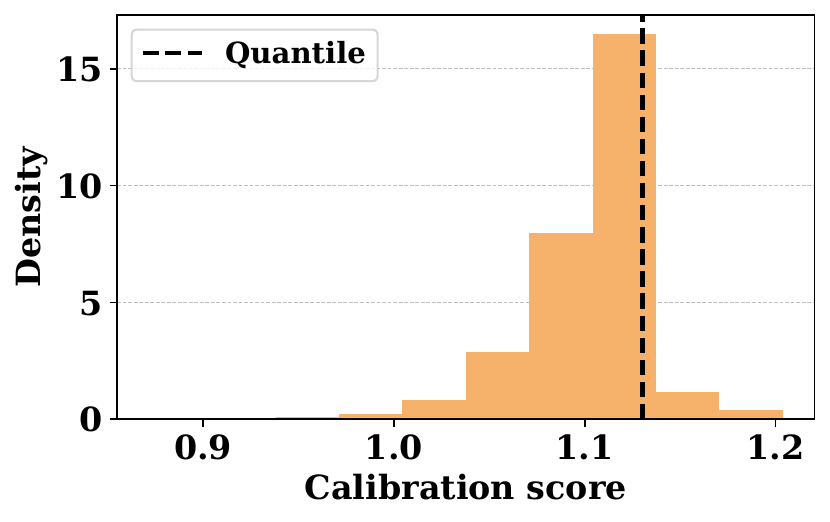}
\caption{Non-conformity score distribution of the prototype-based self-explaining model on MNIST.}
\label{fig:cal_score}
\end{figure}

\begin{table}[t] 
\centering
\begin{tabular}{ccc} 
\toprule
Method & Label coverage & Average set size \\ 
\midrule
unSENN & $0.984 \pm 0.001 $ & $1.00 \pm 0.00$\\
Naive & $0.922 \pm 0.002$ & $1.11 \pm 0.00$\\
Vanilla & $0.908 \pm 0.002$ & $0.91 \pm 0.00$\\
\bottomrule
\end{tabular}
\caption{Label coverage and average set size of the prototype-based self-explaining model on MNIST. }
\label{tab:coverage_mnist}
\end{table}

\end{document}